%% file: main.tex
\documentclass{article} 
\usepackage{iclr2027_conference,times}

\input{math_commands.tex}

\usepackage{hyperref}
\usepackage{url}
\usepackage{drago}
\usepackage[on]{editing}
\usepackage[T1]{fontenc}

\usepackage{tikz}
\usepackage{tcolorbox}
\usepackage{cleveref}
\usepackage{float}

\iclrfinalcopy

\newcommand{\method}{\textsc{Cfg}}

\newtheorem{example}{Example}

\title{Causally Fair Generation with \\Large Language Models}

\author{
Patrik Okanovi{\'c}$^{1}$ \quad Torsten Hoefler$^{1}$ \quad Drago Ple{\v c}ko$^{2}$ \\
$^{1}$Department of Computer Science, ETH Zurich \\
$^{2}$Department of Statistics \& Data Science, UCLA
}

\begin{document}

\maketitle
\fancyhead{}

\begin{abstract}
\input{sections/0Abstract}
\end{abstract}

\input{sections/1.0Introduction}
\input{sections/1.1RelatedWork}
\input{sections/1.2Preliminaries}
\input{sections/2Method}
\input{sections/3Experiments}

\newpage
\subsection*{AI use statement}

We used AI tools to improve grammar and correct typographical errors in the manuscript, and to assist with code for plotting experimental results.
The original implementation of our method was developed without generative AI assistance. We also used AI tools to help create a subset of the evaluation examples used to test concept detection in our ablation studies. We reviewed and verified all AI-assisted edits, plotting code, and resulting figures for correctness and consistency with the underlying results.

\subsection*{Ethics statement}

This work aims to reduce specified sources of disparity in LLM-generated
content. Our experiments use publicly available survey data and a synthetic
population. Generated stories are intended as fictional examples, and their
statistical patterns should not be interpreted as inherent characteristics
of demographic groups. The choice of causal diagram and business-necessity
exceptions involves substantive judgments about which pathways are
justifiable. These choices require domain expertise and consideration of
affected groups.
The method's fairness conclusions depend on these assumptions and on the
representativeness of the population data. Reducing the evaluated causal
effects does not guarantee that generated content is free of stereotypes
or other harms. Applications in consequential settings therefore require
additional validation and human oversight.

\subsection*{Reproducibility statement}

The source code, computational environment, setup instructions, and
instructions for reproducing all experiments are available in our anonymized
repository: \url{https://anonymous.4open.science/r/cfg/}.
\Cref{app:proofs} provides the theoretical assumptions and proofs.
\Cref{app:census-synth,app:effect-experiments,app:xz-generation} describe
the synthetic data-generating process, data preparation, and experimental
protocols, while
\Cref{sec:concept-agent,sec:drf-vs-xgboost,sec:causal-diagram-retrieval}
detail the component evaluations.

\bibliography{iclr2027_conference}
\bibliographystyle{iclr2027_conference}

\appendix
\newpage
\section*{\centering\Large Supplementary Material:\\ \textit{Causally Fair Generation with Large Language Models}}
\input{sections/4Appendix}

\end{document}

%% file: math_commands.tex
\usepackage{amsmath,amsfonts,bm}

\def\eqref#1{equation~\ref{#1}}

\def\1{\bm{1}}

\DeclareMathAlphabet{\mathsfit}{\encodingdefault}{\sfdefault}{m}{sl}
\SetMathAlphabet{\mathsfit}{bold}{\encodingdefault}{\sfdefault}{bx}{n}

\newcommand{\R}{\mathbb{R}}



%% file: sections/0Abstract.tex
Large language models (LLMs) are increasingly used to generate, complete, and transform information in settings where their outputs can shape consequential decisions, raising concerns about their impact on demographic disparities. In this context, causal inference provides a principled basis for assessing fairness, because it attributes observed disparities to the mechanisms that generated them, which a purely statistical approach cannot do even with infinite data. In LLM generation, a query may request several causally related variables, each of which is both an outcome of interest and a possible cause of other outputs, and the information supplied in the prompt need not follow a topological or a temporal order. This calls for methods that can analyze and selectively remove disparities from such a flexible generation process.
In this paper we introduce Causally Fair Generation with LLMs (\method{}, for short). 
\method{} extracts relevant concepts, grounds generation in a reference population and causal diagram, and removes
user-selected causal effects. \method{} also allows
pathways deemed justifiable for the task's utility to be retained, which is known in legal literature as business necessity. 
Further, we provide formal guarantees for our method when eliminating all discriminatory causal effects in the adapted population model, under appropriate causal assumptions. We evaluate \method{} with four LLMs in three real-world settings based on population data and on a synthetic dataset with a known causal ground truth.

%% file: sections/1.0Introduction.tex
\section{Introduction}

Large language models (LLMs) have become general-purpose tools for producing,
retrieving, and transforming information~\citep{brown2020language}. Their
ability to turn partial instructions into fluent text has led to their use in
education, health care, hiring, and other settings where generated content can
inform consequential decisions
\citep{kasneci2023chatgpt,singhal2023large,an-etal-2024-large}. At scale, an
LLM need not make a final decision itself to have an effect: its summaries,
recommendations, synthetic profiles, and inferred attributes can shape the
information presented to a human or a downstream system. These uses make the
behavior of LLMs across demographic groups an important object of study.
Since these models learn from data shaped by social and historical processes,
their outputs may reproduce existing disparities or introduce new ones.

\input{figures/standard_fairness_model}

Most evaluations of social bias begin with statistical comparisons. They test,
for example, whether prompts referring to different demographic groups produce
different occupations, sentiments, or decision rates
\citep{gallegos-etal-2024-bias}. Such comparisons are a useful starting point
for detecting disparities, but they do not explain how those disparities arose.
A difference may be introduced when the model uses a protected attribute directly, inherited from
another generated attribute, or induced by background characteristics
associated with the protected attribute. The same observed disparity can
therefore reflect different underlying causal mechanisms.

The growing literature on causal fairness \citep{kusner2017counterfactual,kilbertus2017avoiding,chiappa2019path,plecko2022causalfairnessanalysis} offers two important benefits in this context. 
First, it yields human-understandable and interpretable notions of fairness, tied to the causal mechanisms that transmit the disparity between groups. 
Second, the causal approach offers a language aligned with legal notions of discrimination, such as the disparate impact doctrine. 
This includes the notion of business necessity, which specifies which covariates a decision-maker may justifiably use, even if their use results in a disparity between groups. 
In other words, the causal approach allows granular removal or retention of causal effects in line with user requirements, something purely associational methods cannot achieve.

Within causal fairness, it is helpful to distinguish three fairness tasks:
(1) bias detection, which assesses discrimination
in existing outcomes or decisions; (2) fair prediction, which constructs predictors satisfying desired fairness constraints; and (3) fair decision-making, which designs actions to improve real-world outcomes under fairness constraints. In this work, we contribute to a new fairness task, namely that of \emph{fair generation}: constructing outputs that remove selected sources of disparity while retaining pathways deemed permissible.
Fair generation is related to fair prediction, but is markedly more complex. To understand this difference, we compare our setting with the classical machine learning setting, shown in \Cref{fig:causal-diagram}(a), where $X$ denotes the protected attribute, $\mathbf Z$ background covariates, $\mathbf W$ mediators, and $Y$ the
outcome. In classical ML, the task of fair prediction aims to learn a single mechanism for producing $\widehat Y$ from these covariates, with $\widehat Y$ mimicking $Y$ in an appropriate way. In contrast, instead of a single prediction, an LLM can generate multiple causally related attributes from a partially specified profile, implicitly supplying a separate generation mechanism for each desired covariate. This is illustrated in the following example:
\begin{example}[Applicant Profiles] \label{ex:profile}
For safety and privacy reasons, testing of public-sector data-processing systems is increasingly based on synthetic data~\citep{ons2023synthetic}. 
Consider an analyst creating synthetic applicant stories to test an employment-support service, using the American
Community Survey (ACS) as the population data source \citep{uscensus2023acs}. In each prompt, the analyst supplies
sex ($X$) and age ($\mathbf Z$) taken from real data, and requests the model to complete the information on education ($Y_1$), employment status ($Y_2$), and salary ($Y_3$):
\begin{quote}
\emph{Write a story about a 40-year-old person recorded as female, including
their education, employment status, and annual salary.}
\end{quote}
To avoid encoding stereotypical patterns in the resulting outputs, the analyst also wants to remove direct effects of sex on employment and salary, $X\to Y_2$ and $X\to Y_3$, while permitting education-mediated pathways for this scenario, such as $X\to Y_1\to Y_2$ and $X\to Y_1\to Y_3$.
Importantly, education and employment are both generation targets and mediators of the effect of sex on salary, so
fair generation must account for the causal relationships among these variables.
\end{example}
In short, the distribution resulting from the above example can be represented as
\begin{align}
    P_{\mathrm{rw}}(x, \mathbf z)P_{\mathrm{LLM}}(y_1, y_2, y_3 \mid x, \mathbf z),
\end{align}
indicating that $x, \mathbf z$ are drawn from real data, while $y_1, y_2, y_3$ are drawn from the LLM's implied distribution. 
Naturally, the LLM's generative distribution may differ from the real-world one, $P_{\mathrm{LLM}}(y_1, y_2, y_3 \mid x, \mathbf z) \neq P_{\mathrm{rw}}(y_1, y_2, y_3 \mid x, \mathbf z)$. 
Moreover, even if the LLM did generate data according to the real-world distribution, this would simply reproduce existing disparities -- not satisfying the analyst's requirement for removing the direct effects of sex on employment and salary. On the other hand, methods for statistical output filtering \citep{gallegos-etal-2024-bias} equalize group distributions and erase the education-mediated differences the analyst chose to retain. Fair generation, therefore, requires careful modification of the selected mechanisms to ensure generation satisfies the target constraints.

More generally, we arrange the generated attributes $Y_1,\ldots,Y_k$ in a
topological order, with each attribute potentially affecting any later one
(\Cref{fig:causal-diagram}(b)). Interestingly, a prompt may also supply later attributes and request earlier ones, so generation need not follow a topological or a temporal
order (e.g., the analyst may request generation of the sex variable based on age and salary). Fairness constraints must account for these dependencies across the complete generation. In this context, our contributions are:
\begin{enumerate}[label=(\roman*),leftmargin=*,itemsep=2pt,topsep=3pt]
    \item We formulate fair generation under the standard fairness model for topological and non-topological queries (\Cref{subsec:sfm-llms}). We characterize business-necessity removal choices and
    provide formal guarantees for generation under full effect removal (\Cref{thm:joint-debiasing}).

    \item We introduce \method{}, an interactive method for causally fair
    generation with LLMs (\Cref{subsec:pipeline}), which
    extracts concepts from a query, grounds them in population data and a
    causal diagram, removes selected effects while allowing business-necessity
    exceptions, and guides the LLM to express the resulting attributes in text.

    \item We evaluate bias removal with \method{} using four LLMs in three real-world settings. We measure reductions in selected effects at intermediate and final attributes, test non-topological queries, and examine the retention of
    permitted pathways (\Cref{sec:topological-removal-results}).
\end{enumerate}

%% file: figures/standard_fairness_model.tex
\begin{figure}[t]
    \centering
    \begin{tikzpicture}[
        >=latex,
        dag-node/.style={font=\large, inner sep=1.5pt},
        causal/.style={->, thick, shorten <=2pt, shorten >=2pt},
        forward-skip/.style={->, draw=black!70, line width=0.7pt,
            shorten <=2pt, shorten >=2pt,
            preaction={draw=white, line width=2.8pt, arrows=-}},
        confounding/.style={<->, thick, dashed, shorten <=2pt, shorten >=2pt},
        panel/.style={font=\small, align=center}
    ]
        \begin{scope}[xshift=-3.4cm]
            \node[dag-node] (x-a) at (-1.5, 0) {$X$};
            \node[dag-node] (z-a) at (-0.125,1.15) {$Z$};
            \node[dag-node] (w-a) at (-0.125,-1.15) {$W$};
            \node[dag-node] (y-a) at (1.35,-0.65) {$Y$};
            \node[dag-node] (y-h) at (1.35,0.65) {$\hat Y$};

            \draw[confounding, bend left=18] (x-a) to (z-a);
            \draw[causal] (z-a) -- (w-a);
            \draw[causal] (x-a) -- (w-a);
            \draw[causal] (x-a) -- (y-a);
            \draw[causal] (z-a) -- (y-a);
            \draw[causal] (w-a) -- (y-a);
            \draw[causal] (x-a) -- (y-h);
            \draw[causal] (z-a) -- (y-h);
            \draw[causal] (w-a) -- (y-h);

            \node[panel] at (-0.1,-2.1)
                {\textbf{(a)} Standard Fairness Model (SFM)\\ ~\citep{plecko2022causalfairnessanalysis}.};
        \end{scope}

        \begin{scope}[xshift=2.8cm]
            \node[dag-node] (x-b) at (-1.7, 0) {$X$};
            \node[dag-node] (z-b) at (-0.6, 1.15) {$Z$};
            \node[dag-node] (y1-b) at (-0.45,-1.05) {$Y_1$};
            \node[dag-node] (y2-b) at ( 0.85,-1.05) {$Y_2$};
            \node[dag-node] (dots-b) at (2.05,-1.05) {$\cdots$};
            \node[dag-node] (yk-b) at ( 3.25,-1.05) {$Y_k$};

            \draw[confounding, bend left=18] (x-b) to (z-b);
            \draw[causal] (x-b) -- (y1-b.west);
            \draw[causal] (x-b) -- (y2-b.north west);
            \draw[causal] (x-b) -- (yk-b.north west);
            \draw[causal] (z-b) -- (y1-b.north);
            \draw[causal] (z-b) -- (y2-b.north);
            \draw[causal] (z-b) -- (yk-b.north);
            \draw[causal] (y1-b) -- (y2-b.west);
            \draw[causal] (y2-b) -- (dots-b);
            \draw[causal] (dots-b) -- (yk-b.west);
            \draw[forward-skip] ([xshift=3pt]y1-b.north)
                to[out=20,in=160,looseness=0.75]
                ([xshift=3pt]yk-b.north);

            \node[panel] at (0.75,-2.1)
                {\textbf{(b)} Standard Fairness Model for LLMs.};
        \end{scope}
    \end{tikzpicture}
    \caption{Causal diagrams for the standard fairness model and its
    LLM adaptation. Panel (b) orders outputs topologically from left to right.
    Its template includes $Y_i\to Y_j$ for every $i<j$; only selected forward
    edges are drawn for readability. The general formulation permits any
    subset of these edges.}
    \label{fig:causal-diagram}
\end{figure}

%% file: sections/1.1RelatedWork.tex
\paragraph{Related Work.}
Bias benchmarks measure stereotypes and representational harms
\citep{nangia-etal-2020-crows,nadeem-etal-2021-stereoset,
parrish-etal-2022-bbq,smith-etal-2022-im}; broader audits assess
trustworthiness, implicit bias, and discrimination
\citep{wang2024decodingtrustcomprehensiveassessmenttrustworthiness,
tamkin2023evaluatingmitigatingdiscriminationlanguage,bai2025explicitly}; see
\citet{gallegos-etal-2024-bias} for a survey. These works measure
disparities statistically, whereas \method{} focuses on the selective removal of undesirable causal effects from generated outputs.
The second line of related work elicits causal knowledge from LLMs or evaluates their causal reasoning
\citep{kiciman2024causalreasoninglargelanguage,zevcevic2023causal,
jin2024cladderassessingcausalreasoning,jin2024can,joshi-etal-2024-llms}, or uses them to propose causal graphs \citep{abdulaal2024causal,liu2024discovery};
\method{} uses an LLM to elicit a standard fairness model over the query's concepts, but deploys the diagram for the downstream purpose of bias removal. 
Next, causal approaches to LLM bias use mediation, invariant learning, controlled generation, and prompting
\citep{vig2020investigating,zhou-etal-2023-causal,
madhavan-etal-2023-cfl,li2025prompting}, but a causal model is embedded in the LLM's internal computation, rather than imposed over population variables as in \method{}.
Finally, causal fairness methods address \emph{fair prediction}, constraining a single predicted outcome in the classical ML setting \citep{kusner2017counterfactual,kilbertus2017avoiding,nabi2018fair,
chiappa2019path,plecko2022causalfairnessanalysis,plecko2024fairadapt},
including for images~\citep{joo2020gender,jung2024counterfactually}, facial
expressions~\citep{cheong2023counterfactual}, and
text~\citep{garg2019counterfactual}.
\citet{plecko2026causalbiasdetectiongenerative} addresses \emph{bias detection} in generative AI, focusing on just quantifying causal effects in generated outputs. \method{}, on the other hand, is the first method for \emph{fair generation}: constructing multiple causally related attributes under user-selected fairness constraints, including for non-topological queries, while retaining pathways deemed permissible.

%% file: sections/1.2Preliminaries.tex
\subsection{Preliminaries}

We use structural causal models (SCMs) as our semantic formalism~\citep{pearl_causality}. An SCM is a tuple
$\mathcal{M}=\langle\mathbf{V},\mathbf{U},\mathcal{F},P(\mathbf{U})\rangle$,
where $\mathbf{V}$ and $\mathbf{U}$ are the endogenous (observed) and
exogenous (latent) variables, respectively, and $P(\mathbf{U})$ is a strictly
positive distribution over $\mathbf{U}$. The set $\mathcal{F}$ contains one
structural assignment
$V_i\leftarrow f_i(\operatorname{pa}(V_i),\mathbf{U}_{V_i})$ for each
$V_i\in\mathbf{V}$, where $\operatorname{pa}(V_i)\subseteq\mathbf{V}$ and
$\mathbf{U}_{V_i}\subseteq\mathbf{U}$. The corresponding causal diagram
$\mathcal{G}$ contains a directed edge $V_i\to V_j$ when $V_i$ is an argument
of $f_j$, and a dashed bidirected edge
$V_i\dashleftarrow\mkern-10mu\dashrightarrow V_j$ when their exogenous
causes are dependent. A realization $\mathbf{U}=\mathbf{u}$ is called a
\emph{unit}. The potential outcome $Y_x(\mathbf{u})$ is the value of $Y$ for
unit $\mathbf{u}$ in the submodel $\mathcal{M}_x$, obtained by replacing the
structural assignment for $X$ with $X\leftarrow x$.

We adopt the standard fairness model in \Cref{fig:causal-diagram}(a). Here,
$X\in\{x_0,x_1\}$ is a protected attribute, $\mathbf{Z}$ contains
pre-exposure covariates that may induce spurious association between $X$ and
$Y$, $\mathbf{W}$ contains mediators, and $Y$ is the outcome. We use lowercase
events as shorthand; for example,
$P(y_{x,\mathbf w}\mid x'):=P(Y_{x,\mathbf w}=y\mid X=x')$. The nested potential outcome
$Y_{x,\mathbf{W}_{x'}}$ sets $X=x$ while assigning $\mathbf{W}$ the value it
would attain under $X=x'$. 
Following the causal-fairness literature~\citep{zhang2018fairness,plecko2022causalfairnessanalysis}, the $x$-specific direct, indirect, and spurious effects are $\text{$x$-DE}_{x_0,x_1}(y\mid x):=\ex(Y_{x_1,\mathbf{W}_{x_0}}\mid x)-\ex(Y_{x_0}\mid x)$, $\text{$x$-IE}_{x_1,x_0}(y\mid x):=\ex(Y_{x_1,\mathbf{W}_{x_0}}\mid x)-\ex(Y_{x_1}\mid x)$, and $\text{$x$-SE}_{x_1,x_0}(y):=\ex(Y_{x_1}\mid x_0)-\ex(Y_{x_1}\mid x_1)$. The direct effect changes $X$ from $x_0$ to $x_1$ along the direct path while holding the mediators at their values under $x_0$. The indirect effect measures the reverse change from $x_1$ to $x_0$ along paths through $\mathbf{W}$, with $X$ fixed at $x_1$ along the direct path. The spurious effect compares the two observed groups after fixing $X=x_1$ interventionally. Based on these notions, the total variation of outcome $Y$, defined as $\operatorname{TV}_{x_0,x_1}(y):=\ex(Y\mid x_1)-\ex(Y\mid x_0)$, can be decomposed as \citep{zhang2018fairness}:
\begin{align}
\operatorname{TV}_{x_0,x_1}(y)=\text{$x$-DE}_{x_0,x_1}(y\mid x_0)-\text{$x$-IE}_{x_1,x_0}(y\mid x_0)-\text{$x$-SE}_{x_1,x_0}(y).
\end{align}
In words, total variation decomposes into direct, indirect, and spurious components.

%% file: sections/2Method.tex
\section{\method{}: Causally Fair Generation with LLMs}
We assume the data-generating process is compatible with the SFM in \Cref{fig:causal-diagram}(a), and we additionally assume a Markovian structure within the set of mediators $\mathbf W$, yielding the graph in \Cref{fig:causal-diagram}(b).

\subsection{Topological \& Non-Topological Queries}
\label{subsec:sfm-llms}
\addtocounter{example}{-1}
\begin{example}[Applicant Profiles -- continued]
An analyst supplies sex ($X$) and age ($\mathbf Z$) variables, and uses an LLM to generate applicant stories, while
removing direct effects of sex on employment ($Y_2$) and salary ($Y_3$) and permitting pathways mediated by education ($Y_1$). She samples from
\begin{align}
    P_{\mathrm{rw}}(x, \mathbf z)P_{\mathrm{LLM}}(y_1, y_2, y_3 \mid x, \mathbf z),
\end{align}
with a specific goal for the $P_{\mathrm{LLM}}$ to satisfy a causal fairness constraint described above.
\end{example}
The query in the above example is \emph{topological}, since its supplied variables $X,\mathbf Z$ causally precede the requested variables $Y_1, Y_2, Y_3$. In such a setting, each requested attribute is both an outcome and a potential mediator for later outputs.
Since part of the data comes from the user (real world), and part of the data from the LLM, we need to represent real-world and generative mechanisms within a common SCM. 
Specifically, we introduce one selector indicator for each structural mechanism, $\mathbf S=(S_0,\ldots,S_k)$, where $S_j\in\{\mathrm{rw},\mathrm{gm}\}$. The selector $S_0$ determines how the root variables $(X,\mathbf Z)$ are generated, while $S_i$, $i\geq 1$, determines the mechanism generating $Y_i$. Each selector chooses both the corresponding structural function and its exogenous distribution. Thus, for a fixed selection $\mathbf s=(s_0,\ldots,s_k)$, the induced SCM
$\mathcal M_{\mathbf s}$ is
\begin{align}
    (X,\mathbf Z) &\leftarrow f_0^{s_0}(U_0),
    & U_0 &\sim P_0^{s_0},\\
    Y_i \quad &\leftarrow f_i^{s_i}(\operatorname{pa}_G(Y_i),U_i),
    & U_i &\sim P_i^{s_i}, \qquad i=1,\ldots,k,
\end{align}
where $U_0,\ldots,U_k$ are mutually independent. Hence, different choices of $\mathbf s$ correspond to replacing individual real-world mechanisms by their generative counterparts while leaving the remaining mechanisms unchanged. For the applicant-profile example, $X,\mathbf Z$ come from the real world ($s_0 = \mathrm{rw}$), while $(Y_1, Y_2, Y_3)$ are generated from the model, so that $s_i = \mathrm{gm}$ for $i=1,2,3$.
More generally, other generative combinations are possible by varying $\mathbf s$. In particular, the selection $\mathbf s^{\mathrm{rw}}=(\mathrm{rw},\ldots,\mathrm{rw})$ recovers the real-world population SCM and induces the population distribution
\[
P_{\mathrm{rw}}(x,\mathbf z,\mathbf y)
=
P_{\mathcal M_{\mathbf s^{\mathrm{rw}}}}(x,\mathbf z,\mathbf y),
\]
which simply corresponds to the ground-truth data in ACS.
For a mixed selection $\mathbf s$, and a fixed LLM, prompting, and decoding parameters, we assume that the distribution of extracted generated attributes is induced by the corresponding selected SCM,
\[
P_{\mathrm{LLM}}(\mathbf y\mid x,\mathbf z)
=
P_{\mathcal M_{\mathbf s}}(\mathbf y\mid x,\mathbf z).
\]
With this in mind, we can introduce the concept of \emph{population grounding} -- where the generative distribution of the model, e.g., $P_{\mathrm{LLM}}(\mathbf y\mid x,\mathbf z)$, is steered towards the real world $P_{\mathrm{rw}}(\mathbf y\mid x,\mathbf z)$.
Grounding alone, however, reproduces the population's disparities, and thus we may be interested in first removing undesirable causal effects, as described in the sequel.

\paragraph{Outcome Effects \& Business Necessity.}
For each generated target $Y_i$, we set $\mathbf W^{(i)}=\mathbf Y_{<i}$ and adopt the DE, IE, and SE definitions from the preliminaries. Evaluating these quantities in $\mathcal M_{\mathbf s}$ gives
\begin{align}
\operatorname{TV}_{x_0,x_1}(y_i) = \text{$x$-DE}_{x_0,x_1}(y_i\mid x_0)-\text{$x$-IE}_{x_1,x_0}(y_i\mid x_0)-\text{$x$-SE}_{x_1,x_0}(y_i).
\label{eq:outcome-specific-decomposition}
\end{align}
The direct effect bypasses earlier generated variables, the indirect effect passes through them, and the spurious effect captures association induced by $\mathbf Z$. In our example, education ($Y_1$) has no earlier mediator and hence no indirect effect; education can mediate effects on employment ($Y_2$), while both education and employment can mediate effects on salary ($Y_3$). The spurious component reflects the association between sex and age that subsequently influences $Y$. For each target $Y_i$, the analyst specifies whether the direct, indirect, and spurious components should be removed.

Fully removing an indirect effect, however, may be too restrictive when some mediated pathways are substantively justified for the task. Such exceptions may be captured through the notion of \emph{business necessity}, a common consideration in the legal literature \citep{bernhardt1971giggs}. Hence, for each target $Y_i$, the analyst may specify a BN set $\mathbf B_i$ of generated variables on directed paths from $X$ to $Y_i$, possibly including the target itself. A directed path is permitted whenever it contains at least one variable in $\mathbf B_i$, and is therefore exempt from the corresponding DE or IE removal request. In our example, choosing $\mathbf B_3=\{Y_1\}=\{\text{education}\}$ for salary permits both $X\to Y_1\to Y_3$ and $X\to Y_1\to Y_2\to Y_3$, while not exempting $X\to Y_3$ or $X\to Y_2\to Y_3$, which bypass education. Likewise, $\mathbf B_2=\{Y_1\}$ permits the education-mediated pathway to employment $X \to Y_1 \to Y_2$. BN exceptions concern directed pathways; spurious component removal is specified separately.

The choice of $\mathbf B_i$ also raises an identification question. A sufficient condition for identifying (and thus correctly removing) the permitted path effects is that $\mathbf B_i$ is \emph{backward closed}:
\[
\operatorname{pa}_G(V)\cap\operatorname{De}_G(X)\subseteq\mathbf B_i \qquad \text{for every } V\in\mathbf B_i,
\]
where $\operatorname{pa}_G(V)$ denotes the parents of $V$ and $\operatorname{De}_G(X)$ the strict descendants of $X$. Thus, whenever a selected BN variable has a parent that is itself affected by $X$, that parent must also be included. The education-only set used above satisfies this condition because education has no generated parents. When backward closure does not hold, the relevant path effects may still be identified under stronger assumptions. \Cref{app:bn-semantics} offers a broader discussion on business necessity.

We collect the analyst's target-specific DE, IE, and SE removal choices together with the BN exceptions into a policy $\rho$. The full notation allows these choices to vary across targets, but the interface can be simpler in practice; for example, the analyst may specify a single BN set to be applied wherever relevant, or use the same removal rule across multiple generated attributes.

\paragraph{Generation for topological queries.}
The policy $\rho$ defines the \emph{fair world}: starting from the real-world population distribution, we modify the selected causal pathways according to $\rho$ and the causal diagram $G$, obtaining an adapted distribution $P_{\mathrm{adapt}}^\rho$. Generation then amounts to fitting and sampling from the conditional distribution required by the query. In our running example, sex and age are supplied, while education, employment, and salary are requested:
\begin{align}
P_{\mathrm{rw}}(X,\mathbf Z,\mathbf Y)
\xrightarrow{\ \mathsf{Adapt}(G,\rho)\ }
P_{\mathrm{adapt}}^\rho(X,\mathbf Z,\mathbf Y)
\xrightarrow{\ \mathsf{Fit}\ }
\widehat P_{\mathrm{adapt}}^\rho(\mathbf Y\mid X,\mathbf Z).
\label{eq:topological-retrieval}
\end{align}
The first step moves from the observed population into the fair world specified by $\rho$; the second extracts the conditional needed for the particular query. Because this query is topological, with $X$ and $\mathbf Z$ supplied before their generated descendants, the resulting conditional also admits the usual interventional interpretation in the adapted SCM, i.e., $P_{\mathrm{adapt}}^\rho(\mathbf Y\mid X=x,\mathbf Z=\mathbf z)=P_{\mathrm{adapt}}^\rho(\mathbf Y\mid do(X=x),\mathbf Z=\mathbf z)$ under the SFM assumptions.

\paragraph{Non-topological queries.} The flexible nature of LLMs also allows for non-topological queries:
\addtocounter{example}{-1}
\begin{example}[Applicant Profiles -- continued]
The analyst provides the following query to the model:
\begin{quote}
\emph{A 40-year-old applicant has a bachelor's degree, is employed full-time, and earns \$52,000 annually. Complete the profile by providing the applicant's sex.}
\end{quote}
Here, $\mathbf Z$ and $\mathbf Y$ are supplied while $X$ is requested, to be sampled from $P_{\mathrm{LLM}}(X \mid \mathbf Z, \mathbf Y)$. 
\end{example}
Since the supplied covariates include descendants of $X$, the query is called \emph{non-topological}, and it is a unique feature of generative models.
Interestingly, our previous recipe for fair generation can be adapted appropriately: we first move into the fair world $P_{\mathrm{adapt}}^\rho$ and only then condition on the attributes supplied by the query. Thus,
\begin{align}
P_{\mathrm{rw}}(X,\mathbf Z,\mathbf Y)
\xrightarrow{\ \mathsf{Adapt}(G,\rho)\ }
P_{\mathrm{adapt}}^\rho(X,\mathbf Z,\mathbf Y)
\xrightarrow{\ \mathsf{Fit}\ }
\widehat P_{\mathrm{adapt}}^\rho(X\mid\mathbf Z,\mathbf Y).
\label{eq:non-topological-retrieval}
\end{align}
The conceptual difference is therefore not which causal pathways are removed, but which conditional of the fair-world distribution is sampled. Unlike the topological case above, $P_{\mathrm{adapt}}^\rho(X\mid\mathbf Z,\mathbf Y)$ is purely predictive and has no causal interpretation: conditioning on education, employment, and salary does not treat them as causes of sex.

\subsection{Harness for Causally Fair Generation}
\label{subsec:pipeline}
\begin{figure}[t]
    \centering
    \includegraphics[width=\linewidth]{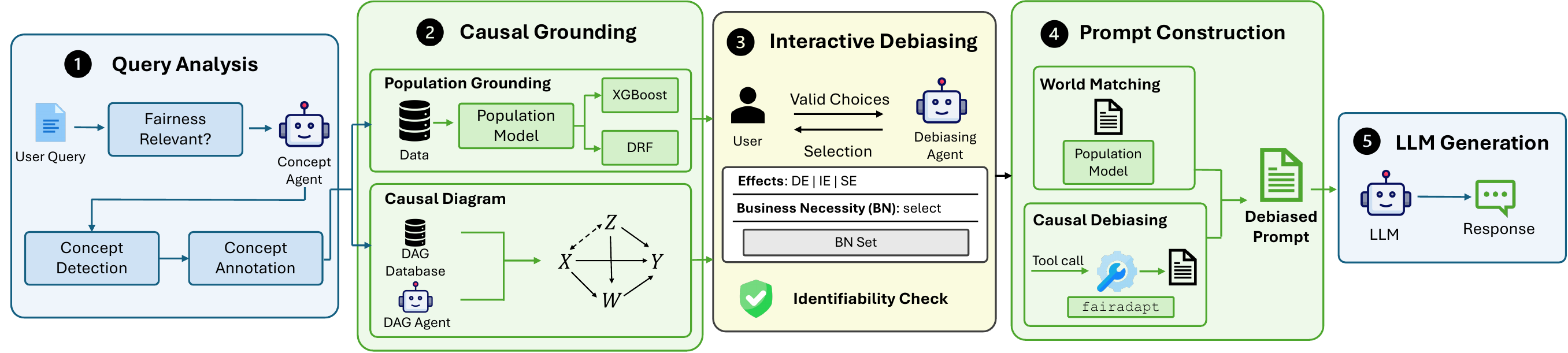}
    \caption{Overview of \method{} for causally fair LLM generation. A
    concept agent extracts and annotates concepts from a fairness-relevant
    query. \method{} then estimates a population model and obtains a causal
    diagram. A debiasing agent presents the user with supported debiasing choices,
    checks their compatibility across outputs, and records the selected effects and
    business-necessity assumptions.
    Finally, \method{} constructs a causally fair output.}
    \label{fig:overview}
\end{figure}

Our next goal is to turn the selected effect removals and BN exceptions into
LLM-generated stories.
\method{} implements this process as an inference-time harness around
the LLM, coordinating population grounding, causal debiasing, and
generation from sampled attribute values.
\Cref{fig:overview} summarizes the five steps of
\method{}: query analysis, population grounding, causal-diagram acquisition,
interactive debiasing, and prompt construction.

\paragraph{(S1) Query analysis.}
A routing classifier first determines whether an input query raises a potential
fairness concern. Selected queries are passed to an LLM-based concept agent,
which uses the task instructions and few-shot examples provided in \Cref{sec:concept-agent}
to identify the variables $V_i\in\mathbf V$ and normalize their explicitly
supplied values. Write $\mathbf C=\mathbf c$ for the variables supplied in context and $\mathbf T$ for those requested for generation. In \Cref{ex:profile},
$\mathbf C=(X,\mathbf Z)$ and $\mathbf T=\mathbf Y$. For the
non-topological query that generates sex from the remaining attributes,
$\mathbf C=(\mathbf Z,\mathbf Y)$ and $\mathbf T=X$. We also support
generation from the full joint distribution, with no supplied attributes:
$\mathbf C=\varnothing$ and $\mathbf T=(X,\mathbf Z,\mathbf Y)$.
For partial completion, the query supplies an initial sequence of outcomes,
$\mathbf C=(X,\mathbf Z,\mathbf Y_{\leq i})$, and requests the remaining
attributes, $\mathbf T=\mathbf Y_{>i}$. In our example, supplying sex, age,
and education leaves employment and salary to be generated.

\paragraph{(S2) Population grounding.}
We estimate $P_{\mathrm{rw}}(\mathbf T\mid\mathbf C=\mathbf c)$ from real-world data of the relevant population, which is assumed to be available. In \Cref{ex:profile}, this is the joint distribution of education, employment, and salary ($\mathbf Y$) in ACS, conditional on the supplied sex and age ($(X,\mathbf Z)$). The fitted population model can be reused across queries. 
Practically, conditional distributions are estimated either via survey-weighted XGBoost classifiers, or distributional random forests (DRF) \citep{cevid2022distributional}.
Joint generation requires composing conditional classifiers in topological order or sampling whole records with forest weights. Details are described in \Cref{sec:drf-vs-xgboost}.

\paragraph{(S3) Causal-diagram acquisition.}
We obtain a causal diagram $G$ over $\mathbf V$ either by matching the
variables to a model in a curated repository or by querying a second LLM-based
agent. The latter returns a candidate SFM assigning the variables their
roles as $X$, $\mathbf Z$, and $\mathbf Y$. In both cases, the diagram is treated as an
explicit structural assumption for the subsequent analysis.
\Cref{sec:causal-diagram-retrieval} evaluates the agent's ability to
propose a causal diagram from the variables in a query.

\paragraph{(S4) Interactive debiasing.}
Our goal is to remove the user-selected direct, indirect, and spurious components while retaining pathways exempted by BN, as specified in $\rho$. We therefore pre-process the data so as to achieve the causal fairness constraint, as described in \Cref{eq:topological-retrieval}. In practice, we use the 
\texttt{fairadapt} method~\citep{plecko2024fairadapt}, which adapts attributes in causal order using conditional quantile models, preserving their conditional ranks as parent values change. 
Further details are provided in \Cref{app:joint-construction,app:bn-semantics}.
In summary, the user records the removal choices in $\rho$, and \method{} returns an
adapted population model:
\begin{align}
  \boxed{
  \mathsf{Adapt}(G,\rho)\colon\ \widehat{P}_{\mathrm{rw}}
  \longmapsto \widehat{P}_{\mathrm{adapt}}^\rho}
  \label{eq:repair-api}
\end{align}
The interface additionally checks that the selected removals admit an identifiable, jointly consistent adaptation across all target variables.
When no effects are selected for removal, generation is calibrated to the real-world reference distribution $\widehat P_{\mathrm{rw}}$, preserving its observed disparities.

\paragraph{(S5) Prompt construction.}
For each query, we jointly sample the requested values
$\mathbf t\sim\widehat P_{\mathrm{adapt}}^\rho(\mathbf T\mid\mathbf C=\mathbf c)$
and insert them into the prompt as fixed constraints alongside the supplied
values $\mathbf c$. The LLM expresses this record in a story. We then verify whether the sampled covariates appear in the story correctly, and if needed, re-prompt the model.

\paragraph{Caching and reuse.}
For a fixed population and set of removal choices, we cache population and adaptation models, causal diagrams, compatibility checks, and retrieval conditionals. Only query analysis, conditioning, record sampling, and LLM generation need be repeated for each request.

\subsection{Theoretical Guarantees}
\label{subsec:guarantees}

Let $\rho^\star$ denote the debiasing policy that removes all directed effects of $X$ on $\mathbf Y$, such as \texttt{fairadapt} used by \method{}. At the population level, this corresponds to recursive substitution (RS):
\begin{align}
Y_i^{\rho^\star}:=f_i^{\mathrm{rw}}\!\left(x_0,\mathbf Z,\mathbf Y_{<i}^{\rho^\star},U_i\right),\qquad i=1,\ldots,k,
\label{eq:recursive-substitution}
\end{align}
so that $\mathbf Y^{\rho^\star}=\mathbf Y_{x_0}$. In practice, \method{} implements this operation using \texttt{fairadapt}.

\begin{theorem}[Debiasing under recursive substitution]
\label{thm:joint-debiasing}
Under $\rho^\star$, the following guarantees hold.

\emph{(i) Topological queries.} For every $\mathbf z$, and every $x$, we have
\begin{align}
P_{\mathrm{adapt}}^{\rho^\star}(\mathbf y\mid x,\mathbf z)
=P(\mathbf Y_{x_0}=\mathbf y\mid\mathbf Z=\mathbf z)
=P_{\mathrm{rw}}(\mathbf y\mid x_0,\mathbf z),
\label{eq:rs-topological}
\end{align}
implying $\mathbf z$-conditional parity.
Further, if the user supplies $(X,\mathbf Z)\sim P^{\mathrm{user}}$ with $P^{\mathrm{user}}(\mathbf z\mid x)=P^{\mathrm{user}}(\mathbf z)$, then the induced joint distribution
$P^{\mathrm{user}}(x,\mathbf z)P_{\mathrm{adapt}}^{\rho^\star}(\mathbf y\mid x,\mathbf z)$ satisfies $X\ci(\mathbf Z,\mathbf Y)$.

\emph{(ii) Non-topological queries.} For every $(\mathbf z,\mathbf y)$,
\begin{align}
P_{\mathrm{adapt}}^{\rho^\star}(x\mid\mathbf z,\mathbf y)
=P_{\mathrm{adapt}}^{\rho^\star}(x\mid\mathbf z)
=P_{\mathrm{rw}}(x\mid\mathbf z).
\label{eq:rs-nontopological}
\end{align}
Thus the generated $X$ is insensitive to the supplied $\mathbf Y$ once $\mathbf Z$ is fixed. If the user supplies $(\mathbf Z,\mathbf Y)\sim P^{\mathrm{user}}$ with
$P^{\mathrm{user}}(\mathbf z,\mathbf y)=P_{\mathrm{adapt}}^{\rho^\star}(\mathbf z,\mathbf y)$, then $P^{\mathrm{user}}(\mathbf z,\mathbf y)P_{\mathrm{adapt}}^{\rho^\star}(x\mid\mathbf z,\mathbf y)
=P_{\mathrm{adapt}}^{\rho^\star}(x,\mathbf z,\mathbf y)$,
so the complete generated population coincides with the debiased joint.
\end{theorem}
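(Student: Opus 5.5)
The plan is to work directly with the recursive-substitution construction in \Cref{eq:recursive-substitution}. The key structural fact is that the adapted SCM keeps the real-world root mechanism $(X,\mathbf Z)\leftarrow f_0^{\mathrm{rw}}(U_0)$ and replaces each $Y_i$ by a function of $(\mathbf Z,U_1,\ldots,U_i)$ alone, with $x_0$ plugged in everywhere. By induction on $i$ we get $Y_i^{\rho^\star}=Y_{i,x_0}$, a measurable function $g_i(\mathbf Z,U_1,\ldots,U_i)$ that does not depend on $X$. Since $U_0,\ldots,U_k$ are mutually independent and $(X,\mathbf Z)$ is a function of $U_0$ only, we obtain
\[
X\ci(U_1,\ldots,U_k)\mid\mathbf Z,
\]
and hence $X\ci\mathbf Y^{\rho^\star}\mid\mathbf Z$ in $P_{\mathrm{adapt}}^{\rho^\star}$. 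This single conditional independence drives both parts.

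For (i), I would write $P_{\mathrm{adapt}}^{\rho^\star}(\mathbf y\mid x,\mathbf z)=P(\mathbf g(\mathbf z,\mathbf U_{1:k})=\mathbf y\mid x,\mathbf z)=P(\mathbf g(\mathbf z,\mathbf U_{1:k})=\mathbf y)$. The first equality substitutes $\mathbf Z=\mathbf z$; the second uses that $\mathbf U_{1:k}$ is independent of $U_0$ and therefore of $(X,\mathbf Z)$. The same independence identifies this quantity with $P(\mathbf Y_{x_0}=\mathbf y\mid\mathbf Z=\mathbf z)$. It also equals $P_{\mathrm{rw}}(\mathbf y\mid x_0,\mathbf z)$: in the real-world SCM, conditioning on $X=x_0,\mathbf Z=\mathbf z$ makes each $f_i^{\mathrm{rw}}$ receive $x_0$, so $\mathbf Y=\mathbf g(\mathbf z,\mathbf U_{1:k})$ on that event, and again $\mathbf U_{1:k}\ci U_0$.

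Positivity of $P(\mathbf U)$ is what keeps these conditionals well defined. The one step that needs care is that conditioning on $X=x_0$ does not bias $\mathbf U_{1:k}$. This holds because, under the SFM/Markovian assumption in \Cref{fig:causal-diagram}(b), the only bidirected edge is between $X$ and $\mathbf Z$, and both are absorbed into $U_0$.

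$\mathbf z$-conditional parity is then immediate, since the right-hand side is free of $x$. For the user-supplied joint $Q(x,\mathbf z,\mathbf y)=P^{\mathrm{user}}(x)P^{\mathrm{user}}(\mathbf z)h(\mathbf y\mid\mathbf z)$, where $h(\mathbf y\mid\mathbf z)$ is the $x$-free conditional just derived, the joint factorizes as a function of $x$ times a function of $(\mathbf z,\mathbf y)$. This factorization gives $X\ci(\mathbf Z,\mathbf Y)$.

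For (ii), I would apply Bayes' rule in $P_{\mathrm{adapt}}^{\rho^\star}$:
\[
P(x\mid\mathbf z,\mathbf y)=\frac{P(\mathbf y\mid x,\mathbf z)P(x\mid\mathbf z)}{\sum_{x'}P(\mathbf y\mid x',\mathbf z)P(x'\mid\mathbf z)}.
\]
By part (i) the likelihood $P(\mathbf y\mid x,\mathbf z)$ does not depend on $x$, so it cancels and leaves $P_{\mathrm{adapt}}^{\rho^\star}(x\mid\mathbf z)$. The adapted model keeps $f_0^{\mathrm{rw}}$, so its $(X,\mathbf Z)$ marginal equals the real-world one, which gives $P_{\mathrm{rw}}(x\mid\mathbf z)$. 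For this step I restrict to $(\mathbf z,\mathbf y)$ of positive adapted probability, implicitly as in the statement. The final claim is the chain rule $P(\mathbf z,\mathbf y)P(x\mid\mathbf z,\mathbf y)=P(x,\mathbf z,\mathbf y)$ once $P^{\mathrm{user}}(\mathbf z,\mathbf y)$ matches the adapted marginal.

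I expect the main obstacle to be bookkeeping rather than depth. One must state precisely that the adapted model is the SCM with $Y$-mechanisms replaced but the root mechanism and exogenous law kept, so that $P_{\mathrm{adapt}}^{\rho^\star}$ is a genuine joint with the real-world $(X,\mathbf Z)$ marginal. One must also justify the identity $\mathbf Y^{\rho^\star}=\mathbf Y_{x_0}$ via composition/induction in the submodel $\mathcal M_{x_0}$.
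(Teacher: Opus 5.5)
Your proposal is correct and follows the paper's proof. Both derive $\mathbf Y_{x_0}\ci X\mid\mathbf Z$ from the independence of the downstream exogenous terms from $U_0$, then identify the conditional with $P_{\mathrm{rw}}(\mathbf y\mid x_0,\mathbf z)$ via consistency, cancel the $x$-free likelihood by Bayes' rule for (ii), and finish with the factorization and chain-rule arguments. One minor point: well-definedness requires $P_{\mathrm{rw}}(x_0,\mathbf z)>0$, an overlap condition on the root mechanism that positivity of $P(\mathbf U)$ does not by itself guarantee, though the paper also assumes this tacitly.
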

The theorem considers two cases.
For topological queries, \Cref{eq:rs-topological} states that after recursive substitution the distribution of the generated record $\mathbf Y$ is the same for every supplied value of $X$, conditional on $\mathbf Z$. The guarantee is therefore $\mathbf z$-conditional. 
Additionally, if the user supplies $(X,\mathbf Z)$ from a distribution in which $X$ and $\mathbf Z$ are independent, this strengthens the conditional guarantee to the independence of $X$ from the complete record $(\mathbf Z,\mathbf Y)$.
For non-topological queries, \Cref{eq:rs-nontopological} gives the corresponding reverse-direction guarantee: for any input $(\mathbf z,\mathbf y)$, the generated $X$ does not depend on $\mathbf y$ once $\mathbf z$ is fixed. Importantly, this pointwise statement does not require the user-supplied $(\mathbf z,\mathbf y)$ to have been drawn from the debiased population. Additionally, however, if the user's inputs $\mathbf Z, \mathbf Y$ follow the debiased marginal $P_{\mathrm{adapt}}^{\rho^\star}(\mathbf z,\mathbf y)$, then the resulting records follow the complete debiased joint distribution $P_{\mathrm{adapt}}^{\rho^\star}(x,\mathbf z,\mathbf y)$.

More generally, the user controls the distribution of the supplied variables $\mathbf C$, while \method{} controls only the conditional distribution of the generated variables $\mathbf T$. Hence the resulting joint distribution is
$P^{\mathrm{user}}(\mathbf C)P_{\mathrm{adapt}}^{\rho^\star}(\mathbf T\mid\mathbf C)$, and coincides with the fully debiased joint only when $P^{\mathrm{user}}(\mathbf C)=P_{\mathrm{adapt}}^{\rho^\star}(\mathbf C)$. This observation also covers hybrid queries such as $\mathbf C=(X,\mathbf Z,\mathbf Y_{\leq i})$: supplied descendants must either already follow the debiased world or be overwritten by their recursively substituted values. We discuss these alternatives in \Cref{app:hybrid-queries}.

%% file: sections/3Experiments.tex
\section{Experiments}

\begin{figure*}[t]
    \centering
    \includegraphics[width=0.95\textwidth]{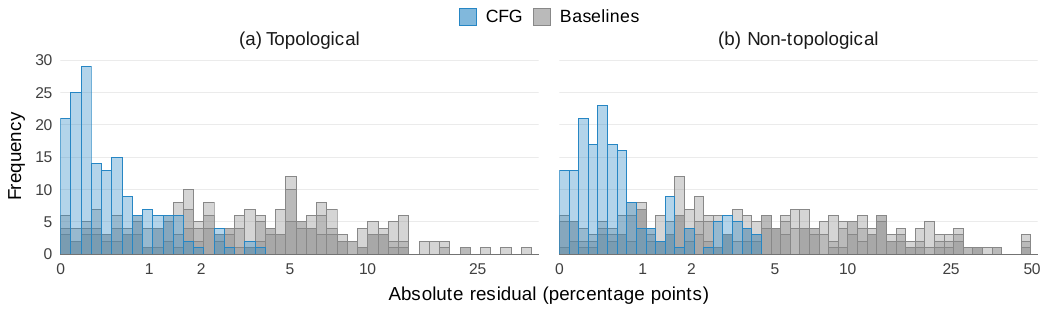}
    \vspace{-0.1in}
    \caption{Absolute recovery residuals under topological and non-topological queries.}
    \vspace{-0.1in}
    \label{fig:recovery-residuals}
\end{figure*}

We evaluate whether \method{} can reproduce the causal effects of a reference population in LLM-generated stories and whether its causal debiasing procedure removes selected sources of disparity from those stories. Both evaluations examine intermediate ($Y_i$, $i<k$) and final outcomes ($Y_k$) to assess causal fairness at these stages of the generated sequence.

\paragraph{Evaluation settings.}
We evaluate generation in three real-world settings: (i) income and substance
use, studying racial disparities using the National Survey on Drug Use and
Health (NSDUH)~\citep{samhsa2023nsduh}; (ii) obesity and diabetes, studying
racial disparities using the Behavioral Risk Factor Surveillance System
(BRFSS)~\citep{cdc2023brfss}; and (iii) employment and salary, studying sex
disparities using the American Community Survey (ACS)~\citep{uscensus2023acs}.
In NSDUH and BRFSS, $X$ is race and $\mathbf Z$ contains age and sex; the
generated attributes $\mathbf Y$ are education, income, and marijuana use for
NSDUH, and education, income, BMI, exercise, and diabetes for BRFSS. In ACS,
$X$ is sex, $\mathbf Z$ contains age, race, and region, and $\mathbf Y$
comprises education, working hours, employer type, and salary.
We also study sex disparities in employment and salary using Census-Synth,
a synthetic population with known ground-truth causal effects, where $X$ is
sex, $\mathbf Z$ is age, and $\mathbf Y$ comprises education, employment, and
salary. \Cref{fig:dataset-fairness-models-v2} of \Cref{app:effect-experiments} presents all the causal diagrams.
\Cref{app:census-synth} provides synthetic data details, while \Cref{app:effect-experiments} provides extended experimental results.

We evaluate four generators: Qwen3-4B~\citep{yang2025qwen3},
Llama~3-8B~\citep{grattafiori2024llama}, Qwen3.8-27B~\citep{qwen38},
and Gemma~4-31B~\citep{gemmateam2026gemma4}. For each model, \method{} is compared against three prompting baselines.
\emph{Vanilla} provides only the task and supplied attributes, with no
external data or retrieval; its generations therefore reflect the
distribution induced by the model's learned weights and the prompt.
\emph{Chain-of-thought (CoT)} adds a request for intermediate reasoning to Vanilla, with the same information available.
\emph{Retrieval-augmented generation (RAG)} supplies the relevant estimated conditional probabilities directly in the prompt. RAG and \method{} use the same fitted population model.
\Cref{app:baselines} details the baseline construction and comparison.

The main topological experiments generate all attributes,
$\mathbf C=\varnothing$ and $\mathbf T=(X,\mathbf Z,\mathbf Y)$;
non-topological queries supply $\mathbf C=(\mathbf Z,\mathbf Y)$
and request $\mathbf T=X$. Completion with supplied $(X,\mathbf Z)$,
as in \Cref{ex:profile}, is evaluated in \Cref{app:xz-generation}.
For all generators and methods, a fixed Qwen3.8-27B annotator extracts
the requested attributes $\mathbf T$ from generated stories as categorical values, which then form the data used to estimate causal effects. \Cref{app:story-annotation} details the annotation procedure.

\subsection{Matching Reference Causal Effects}
We first test whether \method{} can reproduce the reference population's
causal effects before applying any bias removal. We compare TV and its DE, IE, and SE components in generated stories with their reference values for one intermediate and one final target per dataset.
We use a cross-fitted one-step estimator \citep{chernozhukov2018double} for effects in generated stories and survey reference effects; Census-Synth references are computed from the known SCM by Monte Carlo simulation. For each method, we compute absolute errors of the fairness measures relative to the reference values. \Cref{app:effect-experiments} provides outcome definitions, estimation and aggregation details, and mean errors by method.
\Cref{fig:recovery-residuals} pools absolute errors in TV, DE, IE, and SE across generators, datasets, and targets. Under both query types, \method{} matches reference effects better than baselines, with errors concentrating near zero, while baselines show broader distributions with larger errors. Thus, even in the absence of fairness considerations, \method{} can align generation with real-world distributions.

\subsection{Removing Direct, Indirect, and Spurious Effects}
\label{sec:topological-removal-results}

We next evaluate \method{} by removing DE, IE, or SE individually, or all
three together, for the same outcomes. We first use an empty BN set, so no
pathways are exempted from removal. Generated stories are evaluated using
the same annotator and effect estimator as above. We study topological
queries across all four datasets and non-topological queries for Census-Synth
employment. We additionally study Census-Synth salary with a BN set
$\mathbf B=\{\text{education}\}$.

\begin{figure*}[t]
    \centering
    \includegraphics[width=\textwidth]{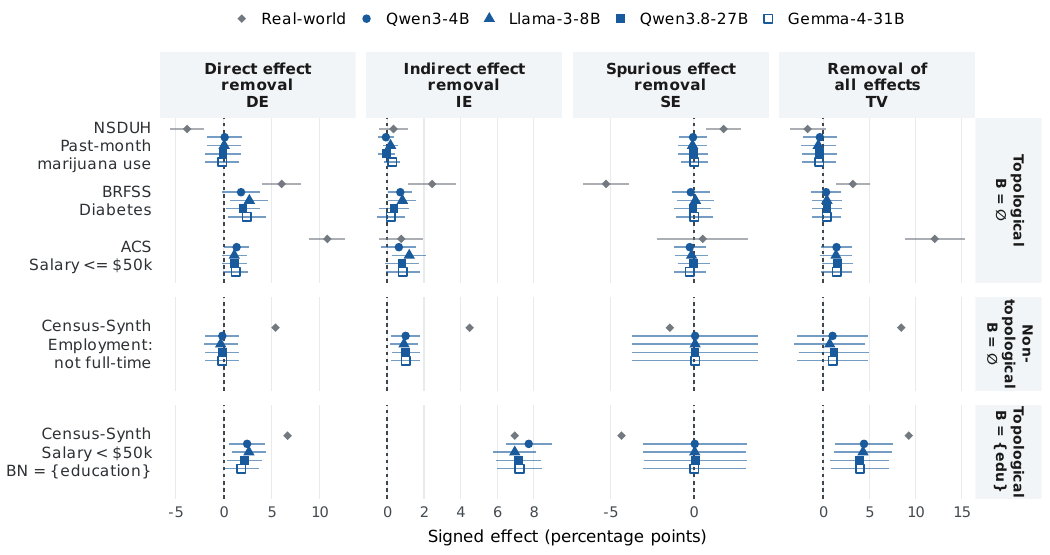}
    \caption{Rows 1--3: topological queries with empty BN set; 
    Row 4: non-topological query; Row 5: topological query with BN requirement. 
    Intervals indicate 95\% confidence intervals.}
    \vspace{-0.1in}
    \label{fig:effect-removal-merged}
\end{figure*}

\paragraph{\method{} aligns fairness behavior across LLM architectures.}
The first four rows of \Cref{fig:effect-removal-merged} show that \method{} brings the selected effects
toward zero for both topological queries and non-topological queries that generate sex from age, education, and employment. The closely aligned estimates across Qwen, Llama, and Gemma show that \method{} induces similar causal fairness behavior, as measured by the targeted effects, across the tested model families and sizes. When all effects are selected for removal, the 95\% confidence intervals for TV include zero in all 20 empty-BN comparisons, corresponding to five dataset-outcome pairs across four generators (see \Cref{app:additional-removal} for details).


\paragraph{Effect removal with a non-empty BN set.}
Returning to the applicant example in \Cref{ex:profile}, the analyst permits education-mediated sex effects on salary. We examine this choice using Census-Synth as a controlled analogue, with $\mathbf B=\{\text{education}\}$. BN permits these pathways and retains larger IE and TV than full effect removal, as shown for topological queries in the bottom row of \Cref{fig:effect-removal-merged}. A remaining salary
gap can therefore reflect the analyst's chosen exceptions. The same pattern holds under non-topological queries; \Cref{app:additional-removal} provides extended results for BN experiments.

\paragraph{Limitations, Strengths, and Future Work.}
An important limitation is that \method{} assumes access to relevant
real-world data through its population-grounding component; such data may be unavailable or insufficiently representative of the target population. Causal effect removal also requires a credible causal diagram, and LLM-proposed diagrams may require domain-expert correction. In addition, \method{} treats the pre-exposure covariates $\mathbf Z$ as a single block, retaining or removing their association with $X$ jointly. \method{}, however, requires no access to the
generator's weights or gradients, making it applicable to black-box LLMs, and supports both topological and non-topological queries.
Future work should address limited population data, sensitivity to causal-diagram uncertainty, and selective adjustment of individual background covariates while accounting for their dependencies.

%% file: sections/4Appendix.tex
The source code for reproducing the results can be found in the anonymized code repository: \url{https://anonymous.4open.science/r/cfg/}.
The repository includes a README file explaining the setup steps. 
All experiments were run on an NVIDIA GH200 GPU, and required 2{,}000 GPU hours.

\input{sections/A1Proofs}

\newpage
\input{sections/A2CensusSynthSCM}

\newpage
\input{sections/A7ExperimentalDetails}

\newpage
\input{sections/A3ConceptAgentEvaluation}

\newpage
\input{sections/A4PopulationFitting}

\newpage
\input{sections/A5CausalDiagramRetrieval}

\newpage
\input{sections/A6IllustrativeConversation}

\newpage
\input{sections/A8XZExperiments}

%% file: sections/A1Proofs.tex
\section{Additional Theoretical Details}
\label[appendix]{app:proofs}

\subsection{Proof of \Cref{thm:joint-debiasing}}
\label{app:joint-construction}

Recall that under $\rho^\star$, recursive substitution fixes the protected attribute input for every downstream mechanism to the value $x_0$:
\begin{align}
Y_i^{\rho^\star}
:=f_i^{\mathrm{rw}}\!\left(x_0,\mathbf Z,\mathbf Y_{<i}^{\rho^\star},U_i\right),
\qquad i=1,\ldots,k.
\label{eq:app-rs}
\end{align}
By recursive substitution, $\mathbf Y^{\rho^\star}=\mathbf Y_{x_0}$.

\begin{proof}[Proof of \Cref{thm:joint-debiasing}]
Because the downstream exogenous variables are independent of the exogenous variables generating $(X,\mathbf Z)$ (according to the SFM in \Cref{fig:causal-diagram}(b)), $\mathbf Y_{x_0}$ is a function only of $\mathbf Z$ and the downstream exogenous variables. Hence, the counterfactual independence $\mathbf Y_{x_0}\ci X\mid\mathbf Z$ holds. Using consistency,
\begin{align}
P_{\mathrm{adapt}}^{\rho^\star}(\mathbf y\mid x,\mathbf z)
&=P(\mathbf Y_{x_0}=\mathbf y\mid X=x,\mathbf Z=\mathbf z) \notag\\
&=P(\mathbf Y_{x_0}=\mathbf y\mid\mathbf Z=\mathbf z)
=P_{\mathrm{rw}}(\mathbf y\mid x_0,\mathbf z),
\end{align}
which proves \Cref{eq:rs-topological}. The debiased joint can therefore be written as
\begin{align}
P_{\mathrm{adapt}}^{\rho^\star}(x,\mathbf z,\mathbf y)
=P_{\mathrm{rw}}(x,\mathbf z)\,
P_{\mathrm{rw}}(\mathbf y\mid x_0,\mathbf z).
\label{eq:app-debiased-joint}
\end{align}
Conditioning this joint in the reverse direction gives
\begin{align}
P_{\mathrm{adapt}}^{\rho^\star}(x\mid\mathbf z,\mathbf y)
&=
\frac{P_{\mathrm{rw}}(x\mid\mathbf z)P_{\mathrm{rw}}(\mathbf y\mid x_0,\mathbf z)}
{\sum_{x'}P_{\mathrm{rw}}(x'\mid\mathbf z)P_{\mathrm{rw}}(\mathbf y\mid x_0,\mathbf z)}
=P_{\mathrm{rw}}(x\mid\mathbf z),
\end{align}
which also equals $P_{\mathrm{adapt}}^{\rho^\star}(x\mid\mathbf z)$ and proves \Cref{eq:rs-nontopological}.

For the topological population statement, if the supplied variables follow $P^{\mathrm{user}}(x,\mathbf z)=P^{\mathrm{user}}(x)P^{\mathrm{user}}(\mathbf z)$, then the resulting joint is
\[
P^{\mathrm{user}}(x)P^{\mathrm{user}}(\mathbf z)P_{\mathrm{adapt}}^{\rho^\star}(\mathbf y\mid\mathbf z),
\]
which factorizes into a distribution of $X$ and a distribution of $(\mathbf Z,\mathbf Y)$, giving $X\ci(\mathbf Z,\mathbf Y)$. For the non-topological population statement, if $P^{\mathrm{user}}(\mathbf z,\mathbf y)=P_{\mathrm{adapt}}^{\rho^\star}(\mathbf z,\mathbf y)$, then
\[
P^{\mathrm{user}}(\mathbf z,\mathbf y)P_{\mathrm{adapt}}^{\rho^\star}(x\mid\mathbf z,\mathbf y)
=P_{\mathrm{adapt}}^{\rho^\star}(x,\mathbf z,\mathbf y)
\]
by the definition of conditional probability.
\end{proof}

\subsection{Business-Necessity Policies}
\label{app:bn-semantics}

\Cref{thm:joint-debiasing} concerns $\rho^\star$, which removes all directed effects of $X$ on the generated outcomes. More general policies may retain selected pathways through business-necessity (BN) exceptions. Identification of such path-specific effects is not always possible. However, backward closure of the BN set gives a sufficient condition for identification \citep[Thm.~6.11]{plecko2022causalfairnessanalysis}.

In practice, we also allow BN specifications not covered by the above identification conditions. In these cases, stronger assumptions are needed. Specifically, we adopt the quantile-preservation assumption used by \texttt{fairadapt}~\citep{plecko2024fairadapt}: conditional latent ranks are preserved when parent values are changed, thereby specifying the joint distribution over counterfactual outcomes. Under this joint coupling, all the resulting quantities are identified.

\subsection{Hybrid Completion Queries}
\label{app:hybrid-queries}
\Cref{thm:joint-debiasing} considers two canonical settings, where either $(X,\mathbf Z)$ are supplied and the entire outcome vector $\mathbf Y$ is generated, or $(\mathbf Z,\mathbf Y)$ are supplied and $X$ is generated. A more general query may instead supply $(X,\mathbf Z,\mathbf Y_{\leq i})$ and request the remaining attributes $\mathbf Y_{>i}$. 
Since the user supplies the values of $X$ and $\mathbf Y_{\leq i}$, the input distribution generated by the user may encode arbitrary relationships of $X \to \mathbf Y_{\leq i}$, which are not under the control of our method. 
If $\mathbf C$ denotes the user-supplied variables and $\mathbf T$ the variables generated by \method{}, the resulting distribution is
\[
P^{\mathrm{user}}(\mathbf C)\,
P_{\mathrm{adapt}}^{\rho^\star}(\mathbf T\mid\mathbf C).
\]
Thus, recovering the complete debiased joint distribution depends on the inputs of the user, and we discuss two possible cases below.

\begin{figure}[t]
\centering
\begin{tikzpicture}[
    >=Latex,
    node distance=1.7cm and 2.2cm,
    every node/.style={font=\Large},
    arr/.style={->, very thick},
    fair/.style={text=orange!90!black}
]
\node (x) {$X$};
\node[below=1.52cm of x] (x0) {$x_0$};
\node[right=of x] (ypre) {$\mathbf Y_{\leq i}$};
\node[right=of ypre] (ypost) {$\mathbf Y_{> i}$};

\node[below=1.4cm of ypre] (ypre0) {$(\mathbf Y_{\leq i})_{x_0}$};
\node[below=1.4cm of ypost] (ypost0) {$(\mathbf Y_{> i})_{x_0}$};

\draw[arr] (x) -- (ypre);
\draw[arr] (ypre) -- (ypost);
\draw[arr] (x0) -- (ypre0);
\draw[arr] (ypre0) -- (ypost0);
\draw[<->, dashed] (ypre) -- (ypre0);
\draw[<->, dashed] (ypost) -- (ypost0);
\end{tikzpicture}
\caption{Twin-network over counterfactual outcomes, related to the hybrid completion setting where the user supplies descendants of $X$. 
The upper path represents the factual values of $X$ and the supplied prefix $\mathbf Y_{\leq i}$, and the generated suffix $\mathbf Y_{>i}$; the lower path represents the recursively substituted world obtained by setting $X=x_0$.}
\label{fig:hybrid-completion}
\end{figure}

\paragraph{Fair-world user inputs.}
The first possibility is that the user supplies $(X,\mathbf Z,\mathbf Y_{\leq i})$ according to their joint marginal distribution in the debiased world. In particular, the supplied descendants are then realizations of $(\mathbf Y_{\leq i})_{x_0}$, as shown in \Cref{fig:hybrid-completion}.
\method{} then samples the remaining attributes from the corresponding debiased conditional,
\[
P_{\mathrm{adapt}}^{\rho^\star}\!\left(\mathbf Y_{>i}\mid X,\mathbf Z,\mathbf Y_{\leq i}\right).
\]
Under this interpretation, the supplied and generated attributes belong to the same debiased population, and the completed record follows the adapted distribution $P_{\mathrm{adapt}}^{\rho^\star}(x, \mathbf z, \mathbf y)$.

\paragraph{Real-world user inputs.}
Alternatively, the supplied $\mathbf Y_{\leq i}$ may be factual values from the real world, corresponding to the upper path in \Cref{fig:hybrid-completion}. These values already encode part of the factual $X\to\mathbf Y$ relationship. Consequently, simply appending a recursively substituted suffix $(\mathbf Y_{>i})_{x_0}$ combines factual and counterfactual components in the generated record, and should not be interpreted as a draw from the fully debiased population. In this case, the natural way forward is to overwrite the supplied descendants with their recursively substituted counterparts $(\mathbf Y_{\leq i})_{x_0}$ and then generate $(\mathbf Y_{>i})_{x_0}$, thereby moving the entire outcome vector to the lower path in \Cref{fig:hybrid-completion}. In this case, some of the user-provided covariates are overwritten, but the generated records can again be shown to follow the adapted distribution $P_{\mathrm{adapt}}^{\rho^\star}(x, \mathbf z, \mathbf y)$.

%% file: sections/A2CensusSynthSCM.tex
\section{Census-Synth Structural Causal Model}
\label[appendix]{app:census-synth}

We generate $n=8{,}192$ observations from the following ordinal SCM. Let
$U,\epsilon_A,\epsilon_E,\epsilon_M,\epsilon_S
\stackrel{\mathrm{iid}}{\sim}\mathcal{N}(0,1)$, let
$\operatorname{cut}_{\boldsymbol\tau}(r)$ map a latent score $r$ to the
ordered category determined by the cut points $\boldsymbol\tau$, and let
$s(V)$ denote the fixed centered category code of $V$. The structural
assignments are
\begin{equation}
\begin{aligned}
  \text{sex}
  &\leftarrow \operatorname{Bernoulli}
    \!\left(\operatorname{logit}^{-1}(0.9U)\right), \\
  \text{age}
  &\leftarrow \operatorname{cut}_{(-0.8,0,0.8)}
    (0.9U+\epsilon_A), \\
  \text{education}
  &\leftarrow \operatorname{cut}_{(-0.8,0.1,1.0)}
    (-0.15-0.35\,\text{sex}+0.35s(\text{age})+\epsilon_E), \\
  \text{employment}
  &\leftarrow \operatorname{cut}_{(-0.45,0.65)}
    (-0.10-0.25\,\text{sex}-0.10s(\text{age})
      +0.75s(\text{education})+\epsilon_M), \\
  \text{salary}
  &\leftarrow \operatorname{cut}_{(-0.65,0.25,1.15)}
    \!\left(
      \begin{aligned}
        &-0.25-0.35\,\text{sex}+0.25s(\text{age})\\
        &+0.55s(\text{education})+0.75s(\text{employment})+\epsilon_S
      \end{aligned}
    \right).
\end{aligned}
  \label{eq:census-synth-scm}
\end{equation}
Age, education, and salary have four ordered categories, whereas employment
has three. Salary uses the brackets below \$25k, \$25k--\$49,999,
\$50k--\$99,999, and \$100k or more, which also define its four
multiple-choice annotation categories. The shared latent variable $U$ induces
the bidirected $\text{sex}\leftrightarrow\text{age}$ relation; all remaining
parental relations in \Cref{eq:census-synth-scm} are shown in
\Cref{fig:dataset-fairness-models-v2}(d). 
The education term in the salary assignment allows an effect at fixed employment status. Both $\text{sex}\to\text{education}\to\text{salary}$ and $\text{sex}\to\text{education}\to\text{employment}\to\text{salary}$ contribute to the indirect effect of sex on salary; the direct effect of sex concerns the edge $\text{sex}\to\text{salary}$.

\paragraph{Ground-truth effects.}
The structural equations specify how each unit responds to interventions,
allowing us to compute the counterfactual quantities in the effect definitions
directly from the SCM. We evaluate factual outcomes, outcomes under
interventions on sex, and nested counterfactual outcomes in which the earlier
outcomes take their values under the reference sex. These are the
$x$-specific direct, indirect, and spurious effects defined in
\citet{plecko2022causalfairnessanalysis} and used in
\Cref{eq:outcome-specific-decomposition}; we retain the paper's signed
contribution convention for IE and SE.
We approximate the required probabilities using $1{,}000{,}000$ Monte Carlo
draws of the exogenous variables, reusing each draw across the factual and
counterfactual evaluations and conditioning on the factual sex group as
required by the definitions. TV is computed from the factual group
probabilities. These calculations provide a Monte Carlo approximation to the
SCM ground truth for all Census-Synth reference comparisons.

%% file: sections/A7ExperimentalDetails.tex
\section{Effect Recovery and Removal: Additional Details}
\label[appendix]{app:effect-experiments}

\subsection{Targets and Evaluation Protocol}
\label[appendix]{app:effect-protocol}

\Cref{fig:dataset-fairness-models-v2} presents the causal diagrams for all
four datasets.
\input{figures/dataset_fairness_models_v2}

NSDUH, BRFSS, and ACS use publicly available 2023 U.S. survey data.
We sample $n=8{,}192$ records per survey using the survey weights and reuse
the same records across methods and generators. Census-Synth contains
$n=8{,}192$ observations generated from the ordinal SCM over sex, age,
education, employment, and salary specified in \Cref{app:census-synth}.

We evaluate one intermediate and one final target per dataset: income and
monthly marijuana use for NSDUH, obesity and diabetes for BRFSS, employer type
and salary for ACS, and employment and salary for Census-Synth. For target
$Y_i$, the analysis includes $X$, the covariates $\mathbf Z$,
and all preceding outcomes $\mathbf W^{(i)}=\mathbf Y_{<i}$; later outcomes
are excluded. The protected contrast is Black or Hispanic versus White for
NSDUH and BRFSS, and female versus male for ACS and Census-Synth, with the
first group encoded as $x_1$ and the second as $x_0$.

The prespecified outcome events are income below \$50,000 and past-month
marijuana use for NSDUH; obesity (classes 1--3) and diabetes for BRFSS;
salary at most \$50,000 for ACS; and non-full-time employment and salary
below \$50,000 for Census-Synth. The four nominal ACS employer categories
are evaluated as separate one-versus-rest events. This gives seven
intermediate and four final events. All methods use the same source records
and event definitions.

We report DE, IE, and SE as additive contributions to TV, reversing the
signs of the IE and SE terms in \Cref{eq:outcome-specific-decomposition}.
Under this reporting convention, $\mathrm{TV}=\mathrm{DE}+\mathrm{IE}+\mathrm{SE}$;
a positive component contributes toward a higher probability of the outcome
event for $X=x_1$ relative to $X=x_0$.

We use a cross-fitted one-step estimator for effects in annotated generations
for all datasets, including Census-Synth. For NSDUH, BRFSS, and ACS, we also
estimate the reference effects from each survey-weighted sample of $8{,}192$
records. Census-Synth reference effects use the Monte Carlo approximation
to SCM ground truth described in \Cref{app:census-synth}.
All effects are in percentage points. The experiment uses one generation
seed; 95\% Wald intervals quantify estimator uncertainty, not variation
across generation runs. The SCM ground-truth values are treated as fixed
references in the plots.

For removal, we apply each setting separately using
\texttt{fairadapt}~\citep{plecko2024fairadapt}. Its SE-removal implementation
represents the $X$--$\mathbf Z$ association by an $X\to\mathbf Z$ adaptation
edge. Topological generation verbalizes a sampled debiased record. For
non-topological generation, a reverse conditional is fitted to the debiased
joint before sampling $X\mid\mathbf Z,Y_1,\ldots,Y_i$ and verbalizing it;
we evaluate this procedure on Census-Synth employment with an empty BN set
and salary with $\mathbf B=\{\text{education}\}$
(\Cref{app:additional-removal}).

\subsection{Annotating Generated Stories}
\label[appendix]{app:story-annotation}

The same Qwen3.8-27B annotator extracts categorical attributes from generated
stories for all methods, generators, and datasets, in both the matching and
effect-removal experiments.

\paragraph{Extracting categories.}
For each story, the annotator is queried separately for every attribute
requested from the generator. Its prompt contains the story, a question
about the attribute, and a lettered list of the dataset's category labels,
followed by an \emph{Answer not available} option. The model first identifies
the person described, gives a brief explanation based on the text, and then
selects an option. It returns these fields in a structured JSON response.
The instructions specify that, if several people or narratives appear, only
the first is considered. Contradictory or unclear information should lead
to \emph{Answer not available}. BRFSS prompts additionally include fixed,
attribute-specific examples; for education, the instructions select the
highest completed level when several compatible levels are mentioned.

For example, an NSDUH story stating that the person used marijuana during
the past month is assigned the category \emph{yes}. The following prompt
shows the category-selection format, with the generated story inserted at
the indicated placeholder:
\begin{tcolorbox}[colback=gray!10,colframe=gray!50,title=\textbf{Prompt}]
\small\ttfamily\raggedright
Rules:\\
1. If there are multiple people or narratives, focus only on the first one.
Ignore any apparent rewrite, restart, or second attempt later in the story.\\
2. If the same person's information is contradictory, choose Answer not available.\\
3. If the answer is not reasonably clear, choose Answer not available rather than guessing.
\par\medskip
Question: determine the person's marijuana use in the past month.
\par\medskip
A. no\\
B. yes\\
C. Answer not available
\par\medskip
First identify who the story is about, briefly explain your reasoning,
and only then choose the option letter. Respond with one line of JSON
and no other text:
\par\smallskip
\{"target\_person": "<person>", "explanation": "<brief reasoning>", "answer": "<letter>"\}
\par\medskip
<story>\\
{[Generated story]}\\
</story>
\end{tcolorbox}
In this example, the returned letter \texttt{B} is mapped to \emph{yes}.
Analogous questions recover education, income, and the other requested
attributes using their own category lists.

\paragraph{Decoding and incomplete annotations.}
Annotation uses greedy decoding with temperature zero, top-$p$ equal to one,
and a maximum of 300 output tokens per question; the model's thinking mode
is disabled. The returned option letter is parsed and mapped to its category.
Abstentions and responses without a valid option are recorded as missing.
For each query, the generation pipeline provides up to four candidate
stories. We annotate them in order and select the first for which every
requested attribute has a valid category. If none is fully annotated, the
final candidate is retained with its unresolved attributes marked as missing.

\paragraph{From stories to analysis data.}
We save the selected categories together with the story, annotation prompts,
raw responses, explanations, and sample identifier. Each selected story then
contributes one row to the analysis dataset. Supplied attributes retain their
query values, while requested attributes take the values extracted from the
story. Thus, in \method{}, attributes sampled for the generator to verbalize
are recovered from the resulting text for evaluation. Category labels and
their ordering match those in the reference data. The resulting records are
used to estimate TV, DE, IE, and SE following \Cref{app:effect-protocol}.

\paragraph{Agreement with sampled values.}
For \method{}, we compare each attribute value sampled for the generator
with the category subsequently selected by the annotator. We report the
percentage of these pairs that match exactly; missing annotations, including
\emph{Answer not available} (NA), count as nonmatches. Each requested attribute
contributes one comparison, using the selected story after the retry procedure
above. Attributes supplied to the query and not annotated are excluded.

\Cref{tab:cfg-round-trip} reports this agreement for topological recovery on
the same available dataset--outcome combinations for all four generators:
NSDUH income and marijuana use, BRFSS obesity and diabetes, ACS employer type,
and Census-Synth employment. Each generator contributes $49{,}152$ selected
stories and $294{,}912$ sampled--annotated pairs. Agreement ranges from
$98.73\%$ to $99.99\%$.

Disagreements include both NA answers and selections of a category different
from the sampled value; the latter account for most nonmatches in these runs.
An NA answer can arise when the generated story omits an attribute or expresses
it ambiguously, or when the annotator fails to identify information in the text.
A different selected category can similarly reflect a change introduced during
generation or an annotation error. This comparison therefore measures how well
the sampled values are preserved through generation and annotation together.

\input{tables/cfg_sampled_annotated_agreement_table}

\subsection{Baseline Construction}
\label{app:baselines}

For each query, $\mathbf C=\mathbf c$ denotes the supplied attributes and
$\mathbf T$ the requested attributes, as in \Cref{subsec:pipeline}. The
three baselines differ in the information and reasoning instructions given
to the generator; they do not adapt the population distribution to remove
selected causal effects.

\paragraph{Vanilla.}
The model receives the story-generation task and the supplied values
$\mathbf c$, without external data, retrieved statistics, or tool access.
The requested values emerge from the model's learned weights under this prompt.

\paragraph{Chain-of-thought.}
CoT uses the Vanilla setup with an additional instruction to reason through
the task before producing the story. It receives no additional population
information.

\paragraph{Retrieval-augmented generation.}
For each conditioning task, a 500-tree distributional random forest fitted
to the reference records estimates the conditional category probabilities
$\widehat P_{\mathrm{rw}}(V_i=v\mid\mathbf C=\mathbf c)$ for each
requested $V_i\in\mathbf T$. These probabilities are inserted directly
into the generation prompt. For example, when requesting $X=\text{sex}$
given age, education, employment, and salary, the prompt might report
``male: 40\%, female: 60\%'' for that supplied context. These percentages
illustrate the prompt format; the retrieved values depend on the query.
The LLM selects the requested values using this information and writes the
story; no category is fixed by an external draw before generation.

\paragraph{Comparison with \method{}.}
In the recovery experiment, RAG and \method{} share the reference records
and fitted forest. RAG presents conditional marginal probabilities, whereas
\method{} uses the forest's observation weights to sample a complete record
for $\mathbf T$ conditional on $\mathbf C=\mathbf c$, preserving dependencies
among the requested attributes. The sampled values become fixed prompt
constraints. For non-topological queries requesting only $X$, the distinction
is between providing its probabilities and externally sampling its value.
For bias removal, \method{} instead samples from the adapted distribution
$\widehat P_{\mathrm{adapt}}^\rho$. All methods use the same generator
set, annotation procedure, and outcome definitions.

\subsection{Aggregation}
\label[appendix]{app:effect-aggregation}

For the tabulated mean absolute errors, we first
average the four ACS employer events within ACS, then weight the datasets
equally. Removal summaries additionally weight generators equally and compare
absolute effect magnitudes before and after debiasing on matched event--generator
pairs. When all effects are selected for removal, we report residual DE, IE, and SE alongside TV to
assess each component of the remaining disparity.

Aggregate errors are computed over outcome contrasts with valid generations,
annotations, and estimable effects; contrasts for which filtering leaves only
one protected group are excluded. The one-step estimator can produce values
outside the probability scale under weak overlap. Available estimate counts
can therefore differ between methods. The plotting conventions are described
in \Cref{app:additional-recovery}. The large Gemma~4-31B
non-topological RAG error is driven by an unstable NSDUH estimate under weak
overlap and should not be interpreted as a probability-scale disparity.

\subsection{Matching Reference Effects Across Outcomes and Query Orders}
\label[appendix]{app:additional-recovery}

We break down the reference-effect errors in \Cref{fig:recovery-residuals}
by generator, query order, and target position, using the outcomes in
\Cref{app:effect-protocol}.

\paragraph{Recovery throughout the generated sequence.}
Under topological queries, \method{}'s intermediate TV errors range from
$0.40$ to $0.58$ percentage points, and its DE, IE, and SE errors are at
most $0.82$, $0.41$, and $0.27$ points, respectively
(\Cref{tab:recovery-errors}). For example, Llama~3-8B achieves an intermediate
TV error of $0.58$ points, compared with $6.53$ for vanilla prompting and
$6.93$ for RAG.
For final outcomes, mean absolute TV errors range from $0.49$ to $0.66$
points across the four generators. With Llama~3-8B, \method{}'s final-outcome
TV error is $0.63$ points, compared with $4.14$ for Vanilla, $8.92$ for CoT,
and $12.42$ for RAG. Across both target positions and all four generators,
mean absolute errors in DE, IE, and SE are at most $0.82$ points.
Thus, \method{} closely matches both aggregate disparity and its constituent
effects at intermediate and final attributes.

\paragraph{Conditioning on descendants.}
Under non-topological queries, \method{}'s TV errors rise to
$1.58$--$1.68$ points for intermediate outcomes and $1.66$--$1.80$ for
final outcomes. Component errors remain at most $2.18$ for DE, $0.88$ for
IE, and $0.53$ for SE across both target positions. For Llama~3-8B, final
TV error is $1.66$ points with \method{}, compared with $10.27$ for
Vanilla, $7.33$ for CoT, and $10.79$ for RAG. Thus, externally sampling $X$
conditional on the supplied descendants yields more accurate reference-effect
matching than presenting its probabilities to the LLM in this comparison.

\paragraph{Histogram construction.}
Each residual $r$ in \Cref{fig:recovery-residuals} is the absolute difference
between a generated-effect estimate and its reference effect, in percentage
points. Bins have width $0.08$ on the $\log(1+r)$ scale, with axis labels in
the original units and separate horizontal ranges for the two panels.
\method{} is shown in blue; the three baseline histograms are overlaid in
grey under a shared legend entry. Bar heights are unnormalized counts of
event--effect pairs for each method, without the dataset weighting used in
\Cref{app:effect-aggregation}. The $50$-point display cutoff omits $2$ of $580$ topological and
$17$ of $672$ non-topological residuals; all remain in the reported averages
and the cumulative distributions below.

\paragraph{Full residual distributions.}
\Cref{fig:recovery-ecdf} complements the averages in \Cref{tab:recovery-errors}
with empirical cumulative distributions for each method. At a threshold $r$,
the curve gives the proportion of absolute residuals at or below $r$.
Each curve pools effects, generators, datasets, outcome events, and target
positions with equal weight per available residual. All $580$ topological
and $672$ non-topological residuals are included, without the histogram's
display cutoff. The horizontal axis uses $\log(1+r)$ with labels in percentage
points. Under both query orders, \method{}'s curve rises earlier, indicating
that a larger share of its residuals is concentrated near zero.

\begin{figure*}[t]
    \centering
    \includegraphics[width=\textwidth]{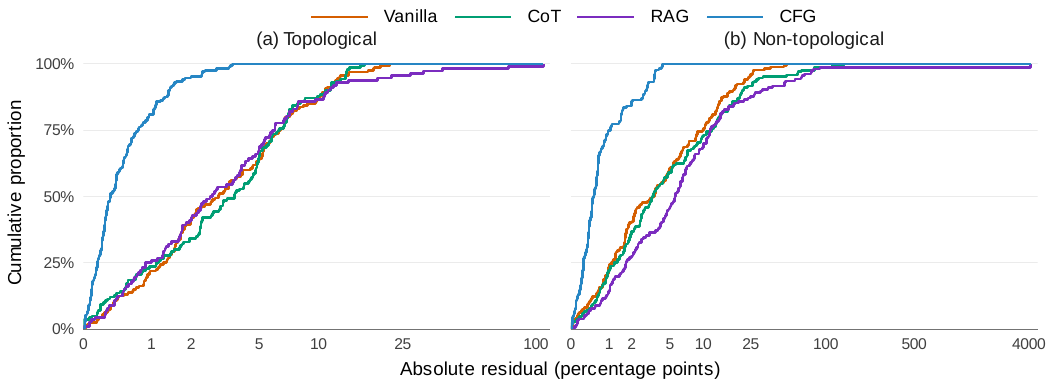}
    \caption{Empirical cumulative distributions of absolute recovery residuals.}
    \label{fig:recovery-ecdf}
\end{figure*}

\begin{table*}[t]
    \centering
    \input{tables/recovery_errors_table}
    \caption{Mean absolute recovery error in percentage points by generator,
    query order, and target position for \method{} and the prompting baselines.
    Dataset contributions are weighted
    equally after averaging ACS employer categories. Bold values indicate
    the smallest error within each comparison.}
    \label{tab:recovery-errors}
\end{table*}

\subsection{Additional Effect-Removal Results}
\label[appendix]{app:additional-removal}

\Cref{fig:removal-empty-bn,fig:removal-census-salary-bn} provide the extended
plots corresponding to \Cref{fig:effect-removal-merged}, including
topological Census-Synth salary with an empty BN set and both query orders
for the non-empty BN comparison.
\Cref{fig:removal-intermediate} complements the main-text empty-BN comparison
with intermediate outcomes under topological queries. The tables report
effect magnitudes before and after debiasing for topological queries
(\Cref{tab:removal-residuals}), non-topological Census-Synth employment
(\Cref{tab:removal-nontopological}), and Census-Synth salary with
$\mathbf B=\{\text{education}\}$ (\Cref{tab:census-salary-bn}).

\begin{figure*}[t]
    \centering
    \includegraphics[width=\textwidth]{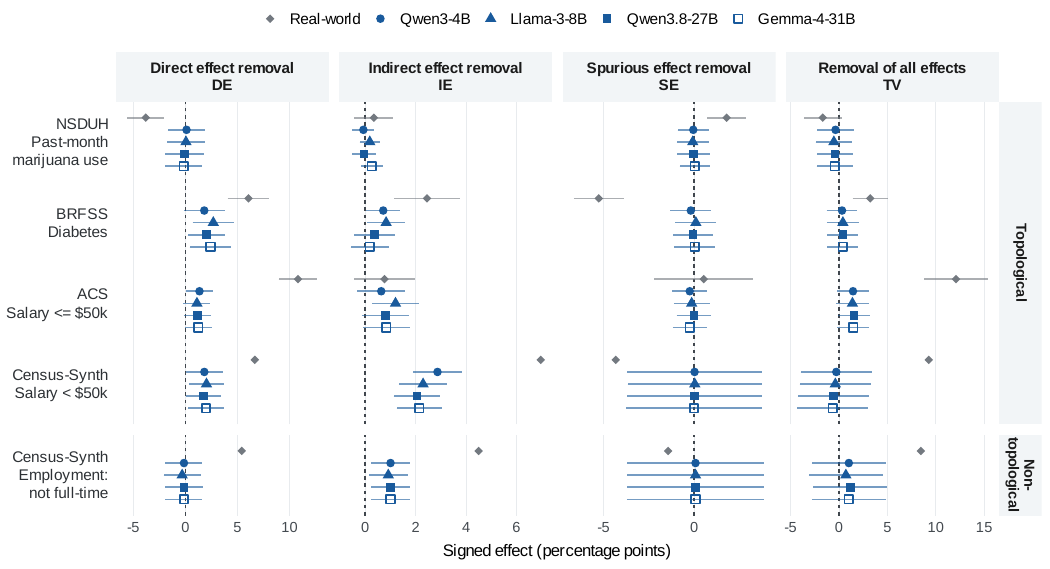}
    \caption{Extended effect-removal results with an empty BN set. The
    upper four rows show final outcomes under topological queries; the
    bottom row shows Census-Synth employment under non-topological queries.
    Columns report the selected effect, or TV when all effects are selected
    for removal. Intervals indicate 95\% confidence intervals.}
    \label{fig:removal-empty-bn}
\end{figure*}

\begin{figure*}[t]
    \centering
    \includegraphics[width=\textwidth]{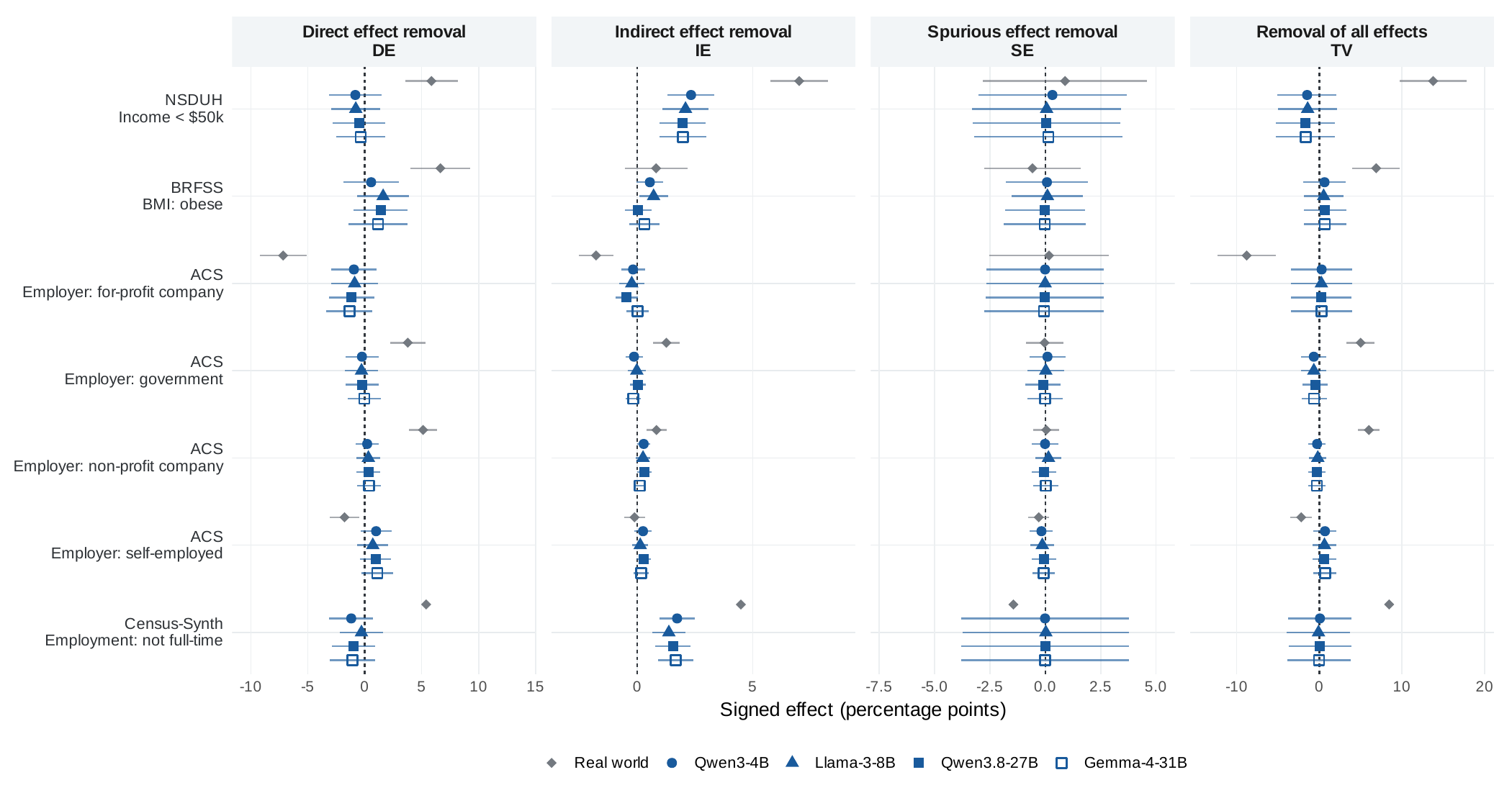}
    \caption{Causal effect removal for intermediate outcomes under
    topological queries. Each panel compares the effect selected for removal
    after debiasing with its original reference value; zero corresponds to complete removal.}
    \label{fig:removal-intermediate}
\end{figure*}

\paragraph{Reduction across targets and generators.}
Across topological intermediate and final outcomes, the selected effect
decreases in magnitude in 169 of 176 comparisons. The remaining seven
involve ACS events whose reference effects are already small, at most
$0.76$ percentage points in magnitude.

\paragraph{Debiasing intermediate attributes.}
Targeting DE reduces its mean absolute magnitude from $5.59$ to $0.75$
percentage points at intermediate outcomes; targeting IE reduces it from
$3.33$ to $1.04$, and targeting SE from $0.76$ to $0.09$. In NSDUH, for
example, the intermediate outcome is income below \$50,000. The reference
direct and education-mediated contributions both increase the probability
of low income for Black or Hispanic individuals, and each is reduced under
its corresponding removal setting (\Cref{fig:removal-intermediate}). This
demonstrates that \method{} can debias an intermediate
attribute that also mediates effects on a later outcome.

\paragraph{Removing all effects.}
\Cref{tab:removal-residuals} shows that removing DE, IE, and SE together
reduces mean absolute TV from $8.65$ to $0.64$ percentage points at
intermediate outcomes. The corresponding DE, IE, and SE residuals are
$0.69$, $0.40$, and $0.09$ points. At final outcomes, TV falls from $6.56$
to $0.54$ points, with component residuals of $0.40$, $0.20$, and $0.07$.
The small aggregate disparity therefore accompanies small direct, indirect,
and spurious contributions, rather than cancellation of large opposing effects.

\begin{table*}[t]
    \centering
    \input{tables/removal_residuals_table}
    \caption{Mean absolute effects in percentage points for the reference and
    \method{}-debiased distributions under topological queries. When all
    effects are selected for removal, we report both TV and its components.}
    \label{tab:removal-residuals}
\end{table*}

\begin{table}[t]
    \centering
    \input{tables/removal_nontopological_table}
    \caption{Mean absolute Census-Synth employment effects in percentage
    points for the reference and \method{}-debiased distributions under non-topological
    queries, averaged across four generators.}
    \label{tab:removal-nontopological}
\end{table}

\paragraph{Business-necessity settings and permitted paths.}
We evaluate salary below \$50,000 using the business-necessity set
$\mathbf B=\{\text{education}\}$ under both query orders, with four
effect-removal choices and four generators. Non-topological queries supply age, education, employment, and
salary and request sex; debiasing precedes conditioning on these supplied values.

\begin{figure*}[t]
    \centering
    \includegraphics[width=\textwidth]{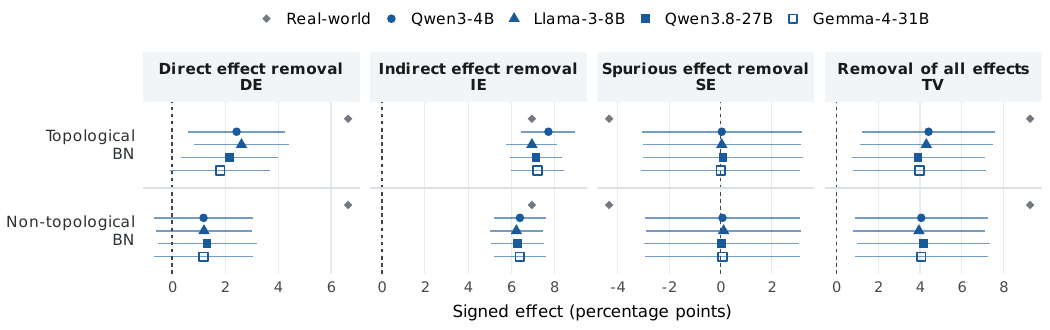}
    \caption{Extended effect-removal results for Census-Synth salary with
    $\mathbf B=\{\text{education}\}$, for both
    topological and non-topological queries. Intervals indicate 95\%
    confidence intervals.}
    \label{fig:removal-census-salary-bn}
\end{figure*}

The BN set permits
$\text{sex}\to\text{education}\to\text{salary}$ and
$\text{sex}\to\text{education}\to\text{employment}\to\text{salary}$.
Paths bypassing education remain subject to removal when their effect family
is selected. These exceptions concern directed pathways; the spurious
component remains subject to removal when selected.

\Cref{fig:removal-census-salary-bn} shows results for both query orders.
The plotted quantities are the full DE, IE, SE, and TV.
A nonzero IE after indirect-effect removal, or TV after removal of all
non-BN effects, can therefore include contributions permitted by BN.
These aggregates do not distinguish permitted contributions from residual
non-BN effects. Zero remains the target for a selected effect with no BN
exception; nonzero estimates cannot then be attributed to an exemption.
\Cref{tab:census-salary-bn} reports the numerical summaries.

\paragraph{Remaining disparity under BN.}
When indirect effects are selected for removal, the mean absolute indirect effect remains close to its
$6.96$-point ground-truth magnitude: $7.26$ points for topological
queries and $6.33$ for non-topological queries
(\Cref{tab:census-salary-bn}). This is consistent with the education-mediated
pathways exempted by BN. When all non-BN effects are selected for removal,
mean absolute TV is $4.15$ points for topological queries and $4.06$ for
non-topological queries, compared with the $9.27$-point reference magnitude.

\begin{table}[tb]
    \centering
    \input{tables/census_salary_bn_table}
    \caption{Mean absolute Census-Synth salary effects in percentage points
    for the reference and \method{}-debiased distributions with
    $\mathbf B=\{\text{education}\}$, averaged across four generators.}
    \label{tab:census-salary-bn}
\end{table}

%% file: figures/dataset_fairness_models_v2.tex
\begin{figure}[t]
    \centering
    \begin{tikzpicture}[
        scale=0.72,
        transform shape,
        >=stealth,
        dag-node/.style={font=\scriptsize, align=center, inner sep=1.5pt},
        confounder-node/.style={draw, rectangle, thin, font=\scriptsize,
            align=center, inner xsep=3pt, inner ysep=2pt},
        causal/.style={->, thin, shorten <=3pt, shorten >=4pt},
        forward/.style={->, thin, shorten <=0.5pt, shorten >=0.5pt,
            preaction={draw=white, line width=2.2pt, arrows=-}},
        chain/.style={forward},
        confounding/.style={<->, thin, dashed,
            shorten <=3pt, shorten >=3pt},
        panel/.style={font=\small, align=center, yshift=-0.4cm}
    ]
        \begin{scope}
            \node[dag-node] (x-a) at (-1.10,0)
                {race};
            \node[confounder-node] (z-a) at (-0.15,1.30)
                {sex \quad age};
            \node[dag-node] (y1-a) at (-0.15,-1.05)
                {edu};
            \node[dag-node] (y2-a) at (1.05,-1.05)
                {income};
            \node[dag-node] (y3-a) at (2.55,-1.05)
                {marijuana use};

            \draw[confounding, bend left=18] (x-a.north) to (z-a.west);
            \draw[causal] (x-a) -- (y1-a.west);
            \draw[causal] (x-a) -- (y2-a.north west);
            \draw[causal] (x-a) -- (y3-a.north west);
            \draw[causal] (z-a) -- (y1-a.north);
            \draw[causal] (z-a) -- (y2-a.north);
            \draw[causal] (z-a) -- (y3-a.north);
            \draw[forward] (y1-a.east) -- (y2-a.west);
            \draw[forward] (y2-a.east) -- (y3-a.west);

            \node[panel] at (0.70,-2.05) {\textbf{(a)} NSDUH.};
        \end{scope}

        \begin{scope}[xshift=4.65cm]
            \node[dag-node] (x-b) at (-1.10,0)
                {race};
            \node[confounder-node] (z-b) at (-0.15,1.30)
                {sex \quad age};
            \node[dag-node] (y1-b) at (-0.15,-1.05)
                {edu};
            \node[dag-node] (y2-b) at (0.95,-1.05)
                {income};
            \node[dag-node] (y3-b) at (2.05,-1.05)
                {BMI};
            \node[dag-node] (y4-b) at (3.20,-1.05)
                {exercise};
            \node[dag-node] (y5-b) at (4.40,-1.05)
                {diabetes};

            \draw[confounding, bend left=18] (x-b.north) to (z-b.west);
            \draw[causal] (x-b) -- (y1-b.west);
            \draw[causal] (x-b) -- (y2-b.north west);
            \draw[causal] (x-b) -- (y3-b.north west);
            \draw[causal] (x-b) -- (y4-b.north west);
            \draw[causal] (x-b) -- (y5-b.north west);
            \draw[causal] (z-b) -- (y1-b.north);
            \draw[causal] (z-b) -- (y2-b.north);
            \draw[causal] (z-b) -- (y3-b.north);
            \draw[causal] (z-b) -- (y4-b.north);
            \draw[causal] (z-b) -- (y5-b.north);
            \draw[forward] (y1-b.east) -- (y2-b.west);
            \draw[forward] (y2-b.east) -- (y3-b.west);
            \draw[forward] (y3-b.east) -- (y4-b.west);
            \draw[forward] (y4-b.east) -- (y5-b.west);

            \node[panel] at (1.85,-2.05) {\textbf{(b)} BRFSS.};
        \end{scope}

        \begin{scope}[xshift=10.45cm]
            \node[dag-node] (x-c) at (-1.10,0)
                {sex};
            \node[confounder-node] (z-c) at (0,1.30)
                {age \quad race \quad region};
            \node[dag-node] (y1-c) at (-0.15,-1.05)
                {edu};
            \node[dag-node] (y2-c) at (0.90,-1.05)
                {hours};
            \node[dag-node] (y3-c) at (2.00,-1.05)
                {employer};
            \node[dag-node] (y4-c) at (3.20,-1.05)
                {salary};

            \draw[confounding, bend left=18] (x-c.north) to (z-c.west);
            \draw[causal] (x-c) -- (y1-c.west);
            \draw[causal] (x-c) -- (y2-c.north west);
            \draw[causal] (x-c) -- (y3-c.north west);
            \draw[causal] (x-c) -- (y4-c.north west);
            \draw[causal] (z-c) -- (y1-c.north);
            \draw[causal] (z-c) -- (y2-c.north);
            \draw[causal] (z-c) -- (y3-c.north);
            \draw[causal] (z-c) -- (y4-c.north);
            \draw[forward] (y1-c.east) -- (y2-c.west);
            \draw[forward] (y2-c.east) -- (y3-c.west);
            \draw[forward] (y3-c.east) -- (y4-c.west);

            \node[panel] at (1.05,-2.05) {\textbf{(c)} Census Income.};
        \end{scope}

        \begin{scope}[xshift=14.90cm]
            \node[dag-node] (x-d) at (-1.10,0)
                {sex};
            \node[confounder-node] (z-d) at (-0.15,1.30)
                {age};
            \node[dag-node] (y1-d) at (-0.15,-1.05)
                {edu};
            \node[dag-node] (y2-d) at (1.15,-1.05)
                {employment};
            \node[dag-node] (y3-d) at (2.70,-1.05)
                {salary};

            \draw[confounding, bend left=18] (x-d.north) to (z-d.west);
            \draw[causal] (x-d) -- (y1-d.west);
            \draw[causal] (x-d) -- (y2-d.north west);
            \draw[causal] (x-d) -- (y3-d.north west);
            \draw[causal] (z-d) -- (y1-d.north);
            \draw[causal] (z-d) -- (y2-d.north);
            \draw[causal] (z-d) -- (y3-d.north);
            \draw[chain] (y1-d.east) -- (y2-d.west);
            \draw[chain] (y2-d.east) -- (y3-d.west);
            \draw[causal, bend left=18] (y1-d.north east) to (y3-d.north west);

            \node[panel] at (0.80,-2.05) {\textbf{(d)} Census-Synth};
        \end{scope}
    \end{tikzpicture}
    \caption{Standard fairness models for NSDUH, BRFSS, ACS, and
    Census-Synth. Generated attributes appear in topological order from left
    to right. In panels (a)--(c), nonconsecutive forward edges are omitted for
    clarity; each generated attribute also points to every later attribute.
    Panel (d) is shown in full.}
    \label{fig:dataset-fairness-models-v2}
\end{figure}

%% file: tables/cfg_sampled_annotated_agreement_table.tex
\begin{table}[t]
\centering
\small
\caption{Sampled--annotated agreement for \method{}.}
\label{tab:cfg-round-trip}
\begin{tabular}{lr}
\toprule
Generator & Agreement (\%) \\
\midrule
Qwen3-4B & 98.96 \\
Llama~3-8B & 98.73 \\
Qwen3.8-27B & 99.75 \\
Gemma-4-31B & 99.99 \\
\bottomrule
\end{tabular}
\end{table}

%% file: tables/recovery_errors_table.tex
\begingroup
\scriptsize
\setlength{\tabcolsep}{2.8pt}
\renewcommand{\arraystretch}{1.08}
\resizebox{\textwidth}{!}{%
\begin{tabular}{lllrrrrrrrr}
\toprule
Generator & Order & Method & \multicolumn{4}{c}{Intermediate target} & \multicolumn{4}{c}{Final target} \\
\cmidrule(lr){4-7}\cmidrule(lr){8-11}
 & & & TV & DE & IE & SE & TV & DE & IE & SE \\
\midrule
\multirow{8}{*}{Qwen3-4B} & \multirow{4}{*}{Topo.} & Vanilla & 7.79 & 4.56 & 2.26 & 1.11 & 10.25 & 8.58 & 0.83 & 2.43 \\
 &  & CoT & 9.64 & 5.51 & 4.04 & 0.81 & 7.62 & 9.51 & 0.73 & 2.48 \\
 &  & RAG & 13.92 & 5.88 & 7.16 & 0.88 & 1.74 & 4.80 & 2.30 & 5.36 \\
 &  & \method{} & \textbf{0.41} & \textbf{0.77} & \textbf{0.36} & \textbf{0.20} & \textbf{0.66} & \textbf{0.49} & \textbf{0.40} & \textbf{0.47} \\
\cmidrule(lr){2-11}
 & \multirow{4}{*}{Non-topo.} & Vanilla & 19.90 & 12.38 & 7.85 & 4.99 & 30.68 & 14.59 & 10.04 & 17.58 \\
 &  & CoT & 33.01 & 14.78 & 11.12 & 22.64 & 60.89 & 19.17 & 6.62 & 48.96 \\
 &  & RAG & 13.86 & 11.16 & 8.74 & 3.20 & 6.61 & 6.17 & 9.60 & 9.17 \\
 &  & \method{} & \textbf{1.62} & \textbf{1.76} & \textbf{0.86} & \textbf{0.20} & \textbf{1.66} & \textbf{2.07} & \textbf{0.55} & \textbf{0.48} \\
\midrule
\multirow{8}{*}{Llama-3-8B} & \multirow{4}{*}{Topo.} & Vanilla & 6.53 & 4.43 & 3.01 & 0.84 & 4.14 & 3.14 & 2.40 & 3.10 \\
 &  & CoT & 8.30 & 4.16 & 3.93 & 1.53 & 8.92 & 7.87 & 2.84 & 3.07 \\
 &  & RAG & 6.93 & 7.79 & 1.26 & 1.12 & 12.42 & 10.90 & 2.03 & 7.07 \\
 &  & \method{} & \textbf{0.58} & \textbf{0.82} & \textbf{0.15} & \textbf{0.18} & \textbf{0.63} & \textbf{0.72} & \textbf{0.39} & \textbf{0.52} \\
\cmidrule(lr){2-11}
 & \multirow{4}{*}{Non-topo.} & Vanilla & 8.48 & 6.39 & 4.26 & 1.77 & 10.27 & 7.96 & 4.48 & 1.94 \\
 &  & CoT & 12.48 & 10.00 & 4.61 & 2.95 & 7.33 & 5.29 & 5.03 & 3.77 \\
 &  & RAG & 11.61 & 9.04 & 5.85 & 3.98 & 10.79 & 5.01 & 8.36 & 5.22 \\
 &  & \method{} & \textbf{1.68} & \textbf{1.66} & \textbf{0.85} & \textbf{0.25} & \textbf{1.66} & \textbf{2.18} & \textbf{0.66} & \textbf{0.45} \\
\midrule
\multirow{8}{*}{Qwen3.8-27B} & \multirow{4}{*}{Topo.} & Vanilla & 6.43 & 5.75 & 3.47 & 2.35 & 5.06 & 6.54 & 1.66 & 2.70 \\
 &  & CoT & 6.00 & 5.74 & 2.98 & 1.71 & 2.19 & 4.86 & 1.63 & 4.14 \\
 &  & RAG & 4.30 & 4.07 & 1.88 & 1.85 & 1.83 & 5.49 & 1.48 & 4.55 \\
 &  & \method{} & \textbf{0.41} & \textbf{0.56} & \textbf{0.35} & \textbf{0.27} & \textbf{0.49} & \textbf{0.60} & \textbf{0.31} & \textbf{0.40} \\
\cmidrule(lr){2-11}
 & \multirow{4}{*}{Non-topo.} & Vanilla & 4.04 & 2.03 & 4.11 & 1.41 & 7.95 & 5.55 & 1.54 & 1.64 \\
 &  & CoT & 2.47 & 3.57 & 1.98 & 0.90 & 7.19 & 5.65 & 1.39 & 1.70 \\
 &  & RAG & 4.08 & 2.38 & 2.84 & 0.74 & 6.31 & 2.40 & 4.36 & 1.08 \\
 &  & \method{} & \textbf{1.58} & \textbf{1.68} & \textbf{0.88} & \textbf{0.25} & \textbf{1.80} & \textbf{2.11} & \textbf{0.48} & \textbf{0.53} \\
\midrule
\multirow{8}{*}{Gemma-4-31B} & \multirow{4}{*}{Topo.} & Vanilla & 12.02 & 6.50 & 3.19 & 4.90 & 8.45 & 5.95 & 3.27 & 2.65 \\
 &  & CoT & 8.02 & 5.18 & 3.57 & 1.70 & 8.99 & 4.41 & \textbf{0.20} & 4.47 \\
 &  & RAG & 41.83 & 16.56 & 3.74 & 25.53 & 8.96 & 6.27 & 2.20 & 5.89 \\
 &  & \method{} & \textbf{0.40} & \textbf{0.61} & \textbf{0.41} & \textbf{0.24} & \textbf{0.66} & \textbf{0.78} & 0.42 & \textbf{0.52} \\
\cmidrule(lr){2-11}
 & \multirow{4}{*}{Non-topo.} & Vanilla & 5.34 & 6.33 & 8.40 & 6.37 & 13.63 & 5.63 & 13.95 & 9.13 \\
 &  & CoT & 4.77 & 7.78 & 10.73 & 2.83 & 16.67 & 5.78 & 16.51 & 4.62 \\
 &  & RAG & 1076.38 & 16.83 & 10.68 & 1067.67 & 33.53 & 23.44 & 20.40 & 23.63 \\
 &  & \method{} & \textbf{1.62} & \textbf{1.76} & \textbf{0.86} & \textbf{0.20} & \textbf{1.66} & \textbf{2.07} & \textbf{0.55} & \textbf{0.48} \\
\bottomrule
\end{tabular}%
}
\endgroup

%% file: tables/removal_residuals_table.tex
\begingroup
\small
\setlength{\tabcolsep}{5pt}
\renewcommand{\arraystretch}{1.08}
\begin{tabular}{llrrrr}
\toprule
Removal setting & Effect & \multicolumn{2}{c}{Intermediate target} & \multicolumn{2}{c}{Final target} \\
\cmidrule(lr){3-4}\cmidrule(lr){5-6}
 & & Reference & \method{} & Reference & \method{} \\
\midrule
Direct effect & DE & 5.59 & 0.75 & 6.82 & 1.32 \\
Indirect effect & IE & 3.33 & 1.04 & 2.62 & 0.97 \\
Spurious effect & SE & 0.76 & 0.09 & 2.98 & 0.08 \\
\midrule
\multirow{4}{*}{All effects} & TV & 8.65 & 0.64 & 6.56 & 0.54 \\
 & DE & 5.59 & 0.69 & 6.82 & 0.40 \\
 & IE & 3.33 & 0.40 & 2.62 & 0.20 \\
 & SE & 0.76 & 0.09 & 2.98 & 0.07 \\
\bottomrule
\end{tabular}
\endgroup

%% file: tables/removal_nontopological_table.tex
\begingroup
\small
\setlength{\tabcolsep}{4.5pt}
\renewcommand{\arraystretch}{1.08}
\begin{tabular}{llrr}
\toprule
Removal setting & Effect & Reference & \method{} \\
\midrule
Direct effect  & DE & 5.39 & 0.19 \\
Indirect effect  & IE & 4.49 & 0.98 \\
Spurious effect  & SE & 1.44 & 0.07 \\
All effects & TV & 8.44 & 0.97 \\
\midrule
All effects & DE & 5.39 & 0.26 \\
All effects & IE & 4.49 & 0.65 \\
All effects & SE & 1.44 & 0.05 \\
\bottomrule
\end{tabular}
\endgroup

%% file: tables/census_salary_bn_table.tex
\begingroup
\small
\setlength{\tabcolsep}{5pt}
\renewcommand{\arraystretch}{1.08}
\begin{tabular}{llrrr}
\toprule
Removal setting & Effect & Reference & Topological & Non-topological \\
\midrule
Direct effect & DE & 6.65 & 2.25 & 1.22 \\
Indirect effect & IE & 6.96 & 7.26 & 6.33 \\
Spurious effect & SE & 4.34 & 0.05 & 0.08 \\
All effects & TV & 9.27 & 4.15 & 4.06 \\
\bottomrule
\end{tabular}
\endgroup

%% file: sections/A3ConceptAgentEvaluation.tex
\section{Concept Agent Evaluation}
\label{sec:concept-agent}

The concept agent implements the query-analysis step of \method{} in
\Cref{subsec:pipeline}. Its purpose is to recover the variables represented in
a generation request before population grounding and causal-diagram
acquisition. We evaluate this component independently of the remaining
steps of \method{} so that downstream assumptions do not obscure extraction errors.

\paragraph{Task.}
Given a prompt requesting a person-level generation, the agent returns a set of
concept records. Each record contains (i) a normalized variable name, (ii) the
value supplied in the prompt, if any, (iii) a status indicating whether the
value is \emph{specified} or merely \emph{requested}, and (iv) an exact span of
the prompt supporting the extraction. Requested concepts have a null value and
become generation targets, whereas specified concepts provide conditioning
information. The agent excludes stylistic and formatting instructions, as well
as attributes that the prompt only prohibits the generated text from
mentioning. The input prompt is presented as data rather than as an instruction
to follow, and the agent does not assign the extracted variables to causal
roles or infer edges between them.

\paragraph{Benchmark.}
We construct a diagnostic benchmark of 100 prompts containing 528 gold concept
records. The benchmark comprises 34 in-domain prompts modeled on the health,
substance-use, census, and employment tasks considered in our experiments; 33
out-of-domain prompts spanning, among others, finance, criminal justice,
transportation, housing, and climate risk; and 33 challenge prompts covering
negation, prohibited attributes, repeated requests, prompts with no eligible
concepts, and prompt-injection language. Here, \emph{out-of-domain} refers to
domains outside the applications used elsewhere in this paper, rather than to
a statistically sampled population of user prompts. Every gold evidence span
occurs verbatim in its prompt. Five additional labeled prompts are reserved as
fixed demonstrations and do not occur in the evaluation set.

\paragraph{Models and evaluation protocol.}
We evaluate the same four generators used in the main experiments, each
receiving the same five demonstrations, with no zero-shot comparison.
We use deterministic decoding
($T=0$ and top-$p=1$) and constrain the response to a JSON schema containing
the four fields described above. Each model is evaluated on all 100 prompts
using a maximum context length of 4,096 tokens and at most 512 generated
tokens.

\paragraph{Metrics.}
We report semantically normalized matching throughout. For evaluation, we apply
the same dictionary of unambiguous lexical aliases to every model, such as
\texttt{sex}$\mapsto$\texttt{gender} and
\texttt{age\_group}$\mapsto$\texttt{age} and
\texttt{annual\_salary}$\mapsto$\texttt{salary}, and apply simple
normalization to values. Name precision, recall, and $F_1$ measure recovery of
the resulting concept set. Record-level precision, recall, and $F_1$
additionally require the specified/requested status and normalized value to
match. We also audit schema validity and whether every predicted evidence span
is present verbatim in the input prompt.

\begin{table}[t]
    \centering
    \small
    \setlength{\tabcolsep}{4pt}
    \begin{tabular}{lccccc}
        \hline
        Model & Name $F_1$ & Record $F_1$ & In-domain & Out-of-domain & Challenge \\
        \hline
        Qwen3-4B      & 95.5 & 90.8 & 93.1 & 87.7 & 92.3 \\
        Llama~3-8B    & 91.6 & 85.7 & 90.4 & 83.0 & 78.9 \\
        Qwen3.8-27B   & 97.9 & 94.3 & 96.2 & 91.1 & 97.5 \\
        Gemma~4-31B   & 95.9 & 91.4 & 94.1 & 86.8 & 95.7 \\
        \hline
    \end{tabular}
    \caption{Few-shot performance of the concept agent for \method{} on the 100-prompt benchmark.
    All entries are percentages. Name and record scores are micro-averaged
    after semantic normalization; the final three columns report record $F_1$
    within each benchmark split.}
    \label{tab:concept-agent-results}
\end{table}

\paragraph{Results.}
Qwen3.8-27B performs best across all reported metrics, achieving 97.9\% name
$F_1$ and 94.3\% record $F_1$. Its record $F_1$ exceeds 91\% in each
benchmark split, including the out-of-domain prompts.

\paragraph{Discussion.}
This performance supports using Qwen3.8-27B for concept detection in
\method{}. Since the prompts are curated and the alias dictionary was refined
after inspecting predictions, the results are a component diagnostic rather
than an independent estimate of generalization to arbitrary requests.

%% file: sections/A4PopulationFitting.tex
\section{Population Fitting Comparison}
\label{sec:drf-vs-xgboost}

We compare the XGBoost and distributional random forest (DRF) estimators
described in step (S2) of \Cref{subsec:pipeline}. XGBoost fits categorical
conditional probabilities; DRF uses a maximum mean discrepancy splitting
criterion to detect changes in the response distribution and assign
query-specific weights to observed records~\citep{cevid2022distributional}.

\paragraph{Evaluation protocol.}
We compare the fitted conditional distribution of race given age, sex,
education, income, and monthly marijuana use on the full preprocessed NSDUH
source of 45,133 records.
For each of five seeds, we hold out 20\% of records, stratifying by race and
monthly marijuana use and keeping identical records in the same partition.
The resulting splits each contain 36,105 training and 9,028 held-out records.
Both estimators use the same training and test partition within each split;
XGBoost selects its number of boosting rounds by five-fold cross-validation
on the training partition only, and DRF uses 500 trees. Analogously to the
negative log predictive density used by \citet{cevid2022distributional}, our
primary metric is held-out categorical log loss.
As a complementary measure, we report the multiclass Brier score \citep{gneiting2007strictly}: the sum of
squared differences between the predicted class-probability vector and the
one-hot observed outcome. We average both scores using the survey weights;
lower values indicate better fit. A probability floor of $10^{-6}$ is applied
when computing log loss.

\begin{table}[t]
    \centering
    \small
    \setlength{\tabcolsep}{8pt}
    \begin{tabular}{lcc}
        \hline
        Estimator & Log loss $\downarrow$ & Brier score $\downarrow$ \\
        \hline
        DRF     & $\mathbf{1.1016 \pm 0.0150}$ & $\mathbf{0.5487 \pm 0.0073}$ \\
        XGBoost & $1.1169 \pm 0.0036$ & $0.5519 \pm 0.0070$ \\
        \hline
    \end{tabular}
    \caption{Survey-weighted held-out goodness of fit on NSDUH for the
    population estimators available in \method{}. Entries are
    means $\pm$ standard deviations across five stratified 80/20 splits.
    }
    \label{tab:nsduh-drf-xgboost-fit}
\end{table}

\paragraph{Results.}
DRF achieves slightly lower mean log loss and Brier score than XGBoost under
this evaluation protocol. Together with its joint-record sampling mechanism,
this supports our choice of DRF as the primary population-fitting option.

%% file: sections/A5CausalDiagramRetrieval.tex
\section{LLM Causal Diagram Retrieval}
\label{sec:causal-diagram-retrieval}

\paragraph{Objective and input.}
We evaluate whether an LLM can recover a plausible standard fairness model
(SFM) from the protected attribute $X$ selected by the user and the remaining
variable names extracted from a query. This tests the agent-based
causal-diagram acquisition step (S3) in \Cref{subsec:pipeline}. Here,
\emph{retrieval} means proposing a diagram using background knowledge; the
model does not select an entry from a supplied graph repository. We isolate
this component by supplying the variable names directly, without concept
extraction errors. The input specifies $X$, whose group disparities are of
interest, but neither the baseline set $\mathbf Z$ nor the ordering of
$\mathbf Y=(Y_1,\ldots,Y_k)$. In particular, the final outcome $Y_k$ is not
given: the model must infer the entire order. No observations, case titles,
reference assignments, or target-case edges are supplied.

\paragraph{SFM construction from a topological order.}
The agent recovers the baseline set $\mathbf Z$ and a topological order
$\mathbf Y=(Y_1,\ldots,Y_k)$. Every remaining variable must appear
exactly once in either $\mathbf Z$ or $\mathbf Y$, with $k\geq1$ and an
optionally empty $\mathbf Z$. Baseline variables in $\mathbf Z$ are not
causally affected by $X$ or $\mathbf Y$. We instantiate the SFM in
\Cref{fig:causal-diagram}(b): $X$ and each $Z_j$ have directed edges to every
$Y_i$, we add $Y_i\to Y_j$ for every $i<j$, and a
bidirected edge connects $X$ with each $Z_j$. No other edges are allowed.
Thus, the roles and order determine the entire graph, including direct
connections between nonadjacent outcomes. This template separates role and
ordering errors from unconstrained edge selection; the general formulation
also permits subsets of the forward edges. The diagram is constructed before
debiasing, so potentially discriminatory edges are retained.

\paragraph{Reference graphs.}
We use the NSDUH and ACS diagrams from the paper and ten hand-curated
hypothetical fairness examples. The synthetic examples were developed with
LLM-assisted drafts, manually revised, and reviewed before evaluation; they
specify baseline roles and outcome orders, rather than datasets generated
from simulated observations.
\Cref{tab:diagram-retrieval-cases} lists all twelve reference assignments.
Under the fixed order-based template above, each row specifies the entire
graph. The examples cover several protected attributes and include an
empty-$\mathbf Z$ case. Their references are reviewed causal assumptions,
rather than uniquely identifiable ground truth from variable names.
BRFSS is reserved as one fixed worked input/output example in every prompt
and is excluded from evaluation. Census-Synth is not part of this twelve-case
retrieval benchmark.

\begin{table}[t]
    \centering
    \footnotesize
    \setlength{\tabcolsep}{3pt}
    \renewcommand{\arraystretch}{1.15}
    \begin{tabular}{@{}>{\raggedright\arraybackslash}p{0.14\linewidth}
                         >{\raggedright\arraybackslash}p{0.13\linewidth}
                         >{\raggedright\arraybackslash}p{0.19\linewidth}
                         >{\raggedright\arraybackslash}p{0.49\linewidth}@{}}
        \toprule
        Case & $X$ & $\mathbf Z$ & Reference $\mathbf Y$ order \\
        \midrule
        NSDUH & race & sex, age &
        education $\prec$ income $\prec$ monthly marijuana use \\
        ACS & sex & age group, race, economic region &
        education $\prec$ weekly hours worked $\prec$ employer type $\prec$ annual salary \\
        \midrule
        Hiring & gender & age &
        education $\prec$ relevant experience $\prec$ interview score $\prec$ hiring decision \\
        Promotion & race & age, sex &
        employer training access $\prec$ job performance $\prec$ promotion decision \\
        Salary offer & sex & age &
        relevant experience $\prec$ initial salary offer $\prec$ negotiation response $\prec$ final salary offer \\
        Admissions & first generation & age, sex &
        household resources $\prec$ school preparation $\prec$ entrance exam score $\prec$ admission decision \\
        Scholarship & disability & age &
        academic merit $\prec$ scholarship awareness $\prec$ application completion $\prec$ scholarship award \\
        Lending & ethnicity & age &
        income $\prec$ repayment history $\prec$ credit score $\prec$ loan approval \\
        Rental housing & nationality & age &
        income $\prec$ viewing invitation $\prec$ reference check $\prec$ lease offer \\
        Insurance & age & sex, race &
        driving experience $\prec$ accident history $\prec$ insurance premium \\
        Benefits & language & age, sex &
        digital literacy $\prec$ application completion $\prec$ eligibility assessment $\prec$ benefit receipt \\
        Moderation & language & $\varnothing$ &
        message content $\prec$ writing style $\prec$ automated toxicity score $\prec$ content removal \\
        \bottomrule
    \end{tabular}
    \caption{The twelve graphs used to evaluate causal-diagram acquisition
    for \method{}. The first two are paper references;
    the remaining ten are hand-curated examples. Spaces replace underscores
    in variable identifiers for readability. The symbol $\prec$ denotes
    precedence in the topological order. Adding every forward outcome edge
    to the fixed SFM template fully specifies each reference graph. The target
    input supplies only $X$ and the unordered remaining names; the $\mathbf Z$
    memberships and $\mathbf Y$ orders shown here are withheld.}
    \label{tab:diagram-retrieval-cases}
\end{table}

\paragraph{Models and protocol.}
We evaluate the four open-weight models used in the main experiments:
Qwen3-4B, Llama~3-8B, Qwen3.8-27B, and Gemma~4-31B. They receive identical
instructions requesting baseline roles and the complete outcome order under
the SFM template, together with the same BRFSS demonstration,
with greedy decoding, thinking disabled, an 8,192-token context limit, and
at most 4,096 output tokens. We additionally evaluate ChatGPT~6 Astra as a
frontier-model comparison available to us at the time of access
(22 September 2026), through the agent interface with requested model
identifier \texttt{gpt-6-astra} and high reasoning effort. Each Astra
presentation uses a fresh agent without conversation history. Public web research is permitted,
but none of the sixty agents reported using it. All reported reading only
their assigned prompt. 

For each graph, we use five distinct presentations: the saved variable order
and four permutations generated with seed 42. These presentations are shared
across models, yielding 60 responses per model and 300 responses overall,
while retaining twelve distinct graphs. Every response is generated in a
fresh context and must return $\mathbf Z$, the complete topological order
of $\mathbf Y$, and both edge lists as JSON. In the saved prompts and
responses, adjacent outcome edges encode this order. For the causal diagrams
used here, we expand that encoding by adding every forward outcome edge;
this deterministic construction requires no further LLM inference.
We use no constrained decoding, corrective
retries, or graph correction. The reference graphs, task prompts, and scoring
rules are fixed across all five models, and Astra answers are frozen before
scoring. 

\paragraph{Metrics.}
We report four complementary scores. \emph{Role macro-$F_1$} averages the
$F_1$ scores for membership in $\mathbf Z$ and $\mathbf Y$, excluding the
supplied $X$; a role absent from both reference and prediction receives
$F_1=1$. \emph{Order agreement} is the fraction of pairs in the reference
$\mathbf Y$ sequence that appear in the correct relative order. A pair is
incorrect if either member is assigned to $\mathbf Z$. This gives partial
credit for nearly correct orders without treating role errors as harmless.
\emph{Exact graph accuracy} requires equality of all directed and bidirected
edges in the graphs constructed from the predicted and reference assignments.
For valid outputs, this is equivalent to recovering the baseline set and the
entire topological order exactly. \emph{Valid-output rate} retains the
original response-schema check: a complete, disjoint assignment and all and
only the edges implied by its adjacent-edge encoding and the fixed SFM
connections. We tolerate prose or Markdown
surrounding a single extractable JSON object, without changing its contents.
Structurally invalid outputs receive zero on all four metrics and remain in
the denominator. Completing the forward edges preserves the role and order
scores and exact-match decisions for valid outputs; the original validity
decisions are retained. All four metrics therefore evaluate the same recovered
roles and order under this graph construction. Scores are averaged over
presentations within each graph and then equally over graphs.

\begin{table}[t]
    \centering
    \small
    \setlength{\tabcolsep}{5pt}
    \begin{tabular}{lrrrr}
        \toprule
        Model & Role macro-$F_1$ & Order agreement & Exact graph & Valid output \\
        \midrule
        Qwen3-4B       & 78.1 & 56.4 & 23.3 & 78.3 \\
        Llama~3-8B     & 73.2 & 45.8 & 11.7 & 78.3 \\
        Qwen3.8-27B    & 95.3 & 83.3 & 58.3 & \textbf{100.0} \\
        Gemma~4-31B    & 94.5 & 87.5 & 60.0 & \textbf{100.0} \\
        ChatGPT~6 Astra (high) & \textbf{97.1} & \textbf{95.8} & \textbf{85.0} & \textbf{100.0} \\
        \bottomrule
    \end{tabular}
    \caption{SFM diagram retrieval for \method{} on twelve graphs, with five variable-order
    presentations per graph and model. All entries are percentages, and
    higher is better.}
    \label{tab:diagram-retrieval-results}
\end{table}

\paragraph{Results.}
\Cref{tab:diagram-retrieval-results} shows that Astra achieves the highest
role and order scores and matches 51 of 60 presentations exactly (85.0\%), compared
with 36 of 60 (60.0\%) for Gemma, the strongest open-weight model by exact
accuracy. On the ten synthetic graphs alone, Astra achieves 96.6\% role
macro-$F_1$, 96.7\% order agreement, and 92.0\% exact accuracy, compared with
62.0\% exact accuracy for Gemma. On the two paper graphs, Astra achieves
100.0\% role macro-$F_1$, 91.7\% order agreement, and 50.0\% exact accuracy.
All Astra outputs are valid even under strict JSON parsing. It exactly
matches every presentation of ten graphs; its nine non-exact responses
occur only in ACS and moderation. The distinction between the partial-credit
and exact metrics is consequential: a single reversed pair can invalidate
an exact match while preserving most of the proposed structure.

\paragraph{ACS: working hours and employer type.}
For all five ACS presentations, Astra recovers the reference baseline set
but proposes the order education $\prec$ employer type $\prec$ weekly hours worked
$\prec$ annual salary. The reference instead uses education $\prec$ weekly
hours worked $\prec$ employer type $\prec$ annual salary. Each order induces
all forward outcome edges. The reference can be
interpreted as working-time commitments or constraints shaping the type of
employer a person enters, which then affects salary. Astra's ordering instead
represents employer type determining working-hour requirements, which affect
annual earnings. Both are plausible simplified mechanisms under different
temporal interpretations of the variables. Their names alone do not specify
whether hours precede employer selection or are determined by the current
employer. Astra reverses one of the six reference $\mathbf Y$ pairs, yielding
83.3\% order agreement for ACS but no exact matches. These are disagreements
with a fixed reference, rather than evidence that the alternative ordering
is causally impossible.

\paragraph{Moderation: message content as a mediator or a baseline.}
The reference sets $\mathbf Z=\varnothing$ and orders message content
$\prec$ writing style $\prec$ automated toxicity score $\prec$ content removal.
Here, language denotes the author's primary language. The reference assumes
that language and its associated communication contexts can affect which
content is expressed; content then influences its linguistic realization
and the moderation score. The graph includes direct edges from message
content to the toxicity score and content removal, as well as paths through
writing style.

In three presentations, Astra instead assigns message content to
$\mathbf Z$ and keeps the order writing style $\prec$ automated toxicity score
$\prec$ content removal for $\mathbf Y$. This is plausible when content means
an intended message or externally specified topic that precedes language
realization: changing the language need not change the intended meaning,
while content can directly affect style, the score, and removal. The implied
language $\leftrightarrow$ message content edge allows shared background
causes of language and topic selection. Thus, the reference and this
alternative correspond to different meanings of ``content'': expressed
content shaped by language versus a prior semantic input. Both are
defensible assumptions for suitably defined tasks, and the names supplied
to the model do not resolve that distinction. A fourth response keeps
$\mathbf Z$ empty but places writing style before message content, which
can represent stylistic choices shaping the eventual expression; this is
again more plausible for expressed content than for fixed intended meaning.
The fifth response matches the reference. We retain all four disagreements
as failures under the prespecified exact-match metric.

\paragraph{Implications for the diagram agent.}
This small, hand-curated benchmark supports using Astra to propose an SFM
for user or expert review. The ACS and moderation cases show why the agent
should expose its roles and order: ambiguous variable definitions can yield
different plausible diagrams with different fairness implications. As in
step (S3), the chosen diagram remains a structural assumption for debiasing.

%% file: sections/A6IllustrativeConversation.tex
\section{Illustrative Conversation with \method{}}
\label[appendix]{app:example-conversation}

The following example illustrates a possible conversation between a user and
an LLM using \method{}. It uses the Census-Synth variables and causal diagram
from \Cref{app:census-synth}. The dialogue illustrates the user's choices;
the final story is taken from a saved generation with an empty BN set and
all effects on salary selected for removal. Effect removal concerns the
distribution from which stories are generated; a single story illustrates
the output format.

\begin{tcolorbox}[colback=gray!10,colframe=gray!50,title=\textbf{User: Prompt}]
Generate a short, realistic story about an adult in the Census-Synth
population. Mention their sex, age, education, employment status, and annual
salary. Use sex as the protected attribute.
\end{tcolorbox}

\begin{tcolorbox}[colback=blue!5,colframe=blue!30,title=\textbf{\method{}: Example Output}]
I can use Census-Synth as the reference population. Its causal diagram can
be written as the following relationships:
\begin{quote}
\small\ttfamily\raggedright
sex -> \{education, employment, salary\}\\
age -> \{education, employment, salary\}\\
education -> \{employment, salary\}\\
employment -> salary\\
sex <-> age
\end{quote}
Here, \texttt{->} denotes a direct causal influence on each variable listed
on the right, and \texttt{<->} denotes unobserved common causes of sex and
age. For salary, education and employment
can transmit indirect effects of sex, while the sex--age association can
contribute a spurious effect.

Please confirm whether you want to use this diagram and focus the fairness
analysis on salary. Would you like to remove any effects before I generate
the story? If you choose no removal, I will sample the attributes from the
reference population distribution, aiming to reproduce its relationships
and disparities. For Census-Synth, that reference is the synthetic population.
\end{tcolorbox}

\begin{tcolorbox}[colback=gray!10,colframe=gray!50,title=\textbf{User: Prompt}]
Yes, use this diagram and focus on salary. I want to remove effects before
you generate the story.
\end{tcolorbox}

\begin{tcolorbox}[colback=blue!5,colframe=blue!30,title=\textbf{\method{}: Example Output}]
First, are there any pathways you consider justified for this task and want
to retain? These are called \textbf{business-necessity (BN) exceptions}.
You can select a BN set from education, employment, and salary, or choose none.
A directed pathway from sex to salary is exempt from removal if it contains
at least one variable in your BN set. For example, selecting education
exempts pathways through education. An empty BN set provides no exceptions.
Spurious-effect removal is specified separately from these exceptions.

Which BN set would you like? Also choose which effects on salary
to remove:
\begin{itemize}[leftmargin=*, nosep]
  \item \textbf{Direct effect:} the effect of sex on salary that bypasses
  education and employment.
  \item \textbf{Indirect effect:} effects transmitted through education
  or employment.
  \item \textbf{Spurious effect:} the contribution associated with the
  sex--age association.
  \item \textbf{All effects:} direct, indirect, and spurious.
\end{itemize}
\end{tcolorbox}

\begin{tcolorbox}[colback=gray!10,colframe=gray!50,title=\textbf{User: Prompt}]
Use an empty BN set, $\{\}$: no exceptions. Remove all effects on salary.
\end{tcolorbox}

\begin{tcolorbox}[colback=blue!5,colframe=blue!30,title=\textbf{\method{}: Example Output}]
\textit{Martha is a female who enjoys spending her afternoons gardening in
her small suburban backyard. Now in the 60--74 years age bracket, she stays
active within her community. Her highest educational attainment is some
college or associate degree, which she earned decades ago. She is currently
part-time employed as a library assistant, a role that brings her great joy.
Because of her limited hours, her annual salary is less than \$25k.}
\end{tcolorbox}

%% file: sections/A8XZExperiments.tex
\section{Generation Conditioned on Protected Attributes and Covariates}
\label[appendix]{app:xz-generation}

We evaluate the setting in which the protected attribute $X$ and covariates
$\mathbf Z$ are supplied to the generator, which produces all outcome
attributes $\mathbf Y$. In the query notation of \Cref{subsec:pipeline},
this corresponds to $\mathbf C=(X,\mathbf Z)$ and $\mathbf T=\mathbf Y$.
The main-text topological experiments instead generate all attributes, with
$\mathbf C=\varnothing$ and $\mathbf T=(X,\mathbf Z,\mathbf Y)$.
We evaluate this setting on NSDUH and Census-Synth with the same four
generators as in the main experiments.
All queries are topological, and we evaluate the final outcomes: past-month
marijuana use for NSDUH and salary below \$50,000 for Census-Synth.

\subsection{Matching Reference Causal Effects}

For recovery, we sample $(X,\mathbf Z)$ jointly from rows of the original
reference data and supply these values to Vanilla, CoT, RAG, and \method{}.
We compare effects in the generated outcomes with their reference values
using the annotation and estimation protocol in
\Cref{app:effect-protocol,app:story-annotation}.

\begin{table}[t]
    \centering
    \input{tables/xz_recovery_errors_table}
    \caption{Mean absolute errors in reference-effect matching with supplied
    $(X,\mathbf Z)$, averaged across generators, in percentage points.}
    \label{tab:xz-recovery-errors}
\end{table}

\Cref{tab:xz-recovery-errors} reports mean absolute residuals, first averaging
equally over available datasets within each generator and then equally over
the four generators. The NSDUH Gemma/RAG run has a constant annotated final
outcome, causing the effect estimator to fail; its model-level mean
therefore uses Census-Synth only.
In this aggregate comparison, \method{} again achieves lower recovery error
than every baseline for all four effects: $1.59$ percentage points for TV,
$1.85$ for DE, $1.19$ for SE, and $0.73$ for IE.

\subsection{Removing Direct, Indirect, and Spurious Effects}

We evaluate \method{} with all four generators, selecting direct, indirect,
or spurious effects for removal individually, or all three together, with an
empty BN set. The supplied $(X,\mathbf Z)$ values are sampled from the
adapted distribution for each removal setting. When spurious-effect removal
is selected, we supply $(X,(\mathbf Z)_{x_0})$, where $(\mathbf Z)_{x_0}$
denotes covariates adjusted to the common reference-group distribution.
Supplying the original covariates would
retain the $X$--$\mathbf Z$ association. Direct- or indirect-effect removal
alone leaves the supplied context distribution unchanged. In each case,
we generate $\mathbf Y$ from the adapted conditional distribution given
the supplied values.
\Cref{fig:xz-effect-removal} shows that the targeted effects are reduced
toward zero in both datasets across all four generators, including the
spurious component when it is selected for removal.

\begin{figure}[t]
    \centering
    \includegraphics[width=\textwidth]{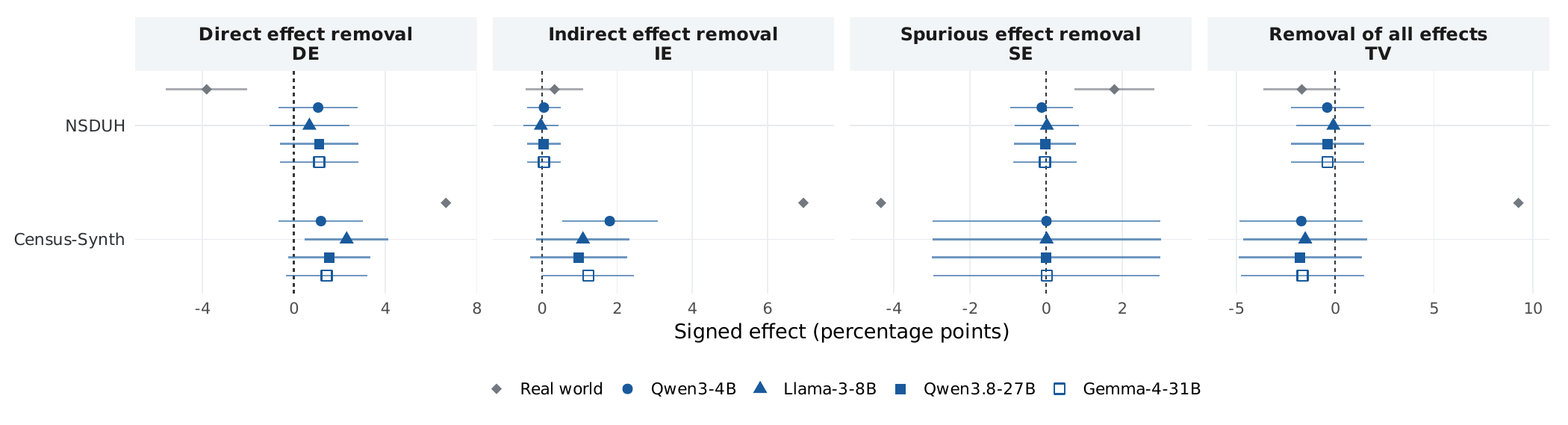}
    \caption{Effect removal for final outcomes with supplied $(X,\mathbf Z)$.}
    \label{fig:xz-effect-removal}
\end{figure}

%% file: tables/xz_recovery_errors_table.tex
\begin{tabular}{lrrrr}
\toprule
Method & TV & DE & SE & IE \\
\midrule
Vanilla & 9.18 & 7.25 & 4.92 & 3.87 \\
CoT & 9.96 & 6.85 & 5.48 & 3.82 \\
RAG & 9.20 & 5.07 & 3.43 & 9.19 \\
\method{} & \textbf{1.59} & \textbf{1.85} & \textbf{1.19} & \textbf{0.73} \\
\bottomrule
\end{tabular}

%% file: iclr2027_conference.bib
@article{bernhardt1971giggs,
  title={Griggs v. Duke Power Co.: The Implications for Private and Public Employers},
  author={Bernhardt, Herbert N},
  journal={Tex. L. Rev.},
  volume={50},
  pages={901},
  year={1971},
  publisher={HeinOnline}
}

@inproceedings{zhang2018fairness,
  title={Fairness in decision-making—the causal explanation formula},
  author={Zhang, Junzhe and Bareinboim, Elias},
  booktitle={Proceedings of the AAAI conference on artificial intelligence},
  volume={32 (1)},
  year={2018}
}

@misc{plecko2022causalfairnessanalysis,
      title={Causal Fairness Analysis}, 
      author={Drago Plecko and Elias Bareinboim},
      year={2022},
      eprint={2207.11385},
      archivePrefix={arXiv},
      primaryClass={cs.AI},
      url={https://arxiv.org/abs/2207.11385}, 
}

@book{pearl_causality,
author = {Pearl, Judea},
title = {Causality: Models, Reasoning and Inference},
year = {2009},
isbn = {052189560X},
publisher = {Cambridge University Press},
address = {USA},
edition = {2nd}
}

@inproceedings{nangia-etal-2020-crows,
    title = "{C}row{S}-Pairs: A Challenge Dataset for Measuring Social Biases in Masked Language Models",
    author = "Nangia, Nikita  and
      Vania, Clara  and
      Bhalerao, Rasika  and
      Bowman, Samuel R.",
    editor = "Webber, Bonnie  and
      Cohn, Trevor  and
      He, Yulan  and
      Liu, Yang",
    booktitle = "Proceedings of the 2020 Conference on Empirical Methods in Natural Language Processing (EMNLP)",
    month = nov,
    year = "2020",
    address = "Online",
    publisher = "Association for Computational Linguistics",
    url = "https://aclanthology.org/2020.emnlp-main.154/",
    doi = "10.18653/v1/2020.emnlp-main.154",
    pages = "1953--1967"
}

@inproceedings{nadeem-etal-2021-stereoset,
    title = "{S}tereo{S}et: Measuring stereotypical bias in pretrained language models",
    author = "Nadeem, Moin  and
      Bethke, Anna  and
      Reddy, Siva",
    editor = "Zong, Chengqing  and
      Xia, Fei  and
      Li, Wenjie  and
      Navigli, Roberto",
    booktitle = "Proceedings of the 59th Annual Meeting of the Association for Computational Linguistics and the 11th International Joint Conference on Natural Language Processing (Volume 1: Long Papers)",
    month = aug,
    year = "2021",
    address = "Online",
    publisher = "Association for Computational Linguistics",
    url = "https://aclanthology.org/2021.acl-long.416/",
    doi = "10.18653/v1/2021.acl-long.416",
    pages = "5356--5371"
}

@inproceedings{parrish-etal-2022-bbq,
    title = "{BBQ}: A hand-built bias benchmark for question answering",
    author = "Parrish, Alicia  and
      Chen, Angelica  and
      Nangia, Nikita  and
      Padmakumar, Vishakh  and
      Phang, Jason  and
      Thompson, Jana  and
      Htut, Phu Mon  and
      Bowman, Samuel R.",
    editor = "Muresan, Smaranda  and
      Nakov, Preslav  and
      Villavicencio, Aline",
    booktitle = "Findings of the Association for Computational Linguistics: ACL 2022",
    month = may,
    year = "2022",
    address = "Dublin, Ireland",
    publisher = "Association for Computational Linguistics",
    url = "https://aclanthology.org/2022.findings-acl.165/",
    doi = "10.18653/v1/2022.findings-acl.165",
    pages = "2086--2105"
}

@inproceedings{smith-etal-2022-im,
    title = "``{I}{'}m sorry to hear that'': Finding New Biases in Language Models with a Holistic Descriptor Dataset",
    author = "Smith, Eric Michael  and
      Hall, Melissa  and
      Kambadur, Melanie  and
      Presani, Eleonora  and
      Williams, Adina",
    editor = "Goldberg, Yoav  and
      Kozareva, Zornitsa  and
      Zhang, Yue",
    booktitle = "Proceedings of the 2022 Conference on Empirical Methods in Natural Language Processing",
    month = dec,
    year = "2022",
    address = "Abu Dhabi, United Arab Emirates",
    publisher = "Association for Computational Linguistics",
    url = "https://aclanthology.org/2022.emnlp-main.625/",
    doi = "10.18653/v1/2022.emnlp-main.625",
    pages = "9180--9211"
}

@misc{wang2024decodingtrustcomprehensiveassessmenttrustworthiness,
      title={DecodingTrust: A Comprehensive Assessment of Trustworthiness in GPT Models}, 
      author={Boxin Wang and Weixin Chen and Hengzhi Pei and Chulin Xie and Mintong Kang and Chenhui Zhang and Chejian Xu and Zidi Xiong and Ritik Dutta and Rylan Schaeffer and Sang T. Truong and Simran Arora and Mantas Mazeika and Dan Hendrycks and Zinan Lin and Yu Cheng and Sanmi Koyejo and Dawn Song and Bo Li},
      year={2024},
      eprint={2306.11698},
      archivePrefix={arXiv},
      primaryClass={cs.CL},
      url={https://arxiv.org/abs/2306.11698}, 
}

@misc{tamkin2023evaluatingmitigatingdiscriminationlanguage,
      title={Evaluating and Mitigating Discrimination in Language Model Decisions}, 
      author={Alex Tamkin and Amanda Askell and Liane Lovitt and Esin Durmus and Nicholas Joseph and Shauna Kravec and Karina Nguyen and Jared Kaplan and Deep Ganguli},
      year={2023},
      eprint={2312.03689},
      archivePrefix={arXiv},
      primaryClass={cs.CL},
      url={https://arxiv.org/abs/2312.03689}, 
}

@article{
bai2025explicitly,
author = {Xuechunzi Bai  and Angelina Wang  and Ilia Sucholutsky  and Thomas L. Griffiths },
title = {Explicitly unbiased large language models still form biased associations},
journal = {Proceedings of the National Academy of Sciences},
volume = {122},
number = {8},
pages = {e2416228122},
year = {2025},
doi = {10.1073/pnas.2416228122},
URL = {https://www.pnas.org/doi/abs/10.1073/pnas.2416228122},
eprint = {https://www.pnas.org/doi/pdf/10.1073/pnas.2416228122}}

@article{gallegos-etal-2024-bias,
    title = "Bias and Fairness in Large Language Models: A Survey",
    author = "Gallegos, Isabel O.  and
      Rossi, Ryan A.  and
      Barrow, Joe  and
      Tanjim, Md Mehrab  and
      Kim, Sungchul  and
      Dernoncourt, Franck  and
      Yu, Tong  and
      Zhang, Ruiyi  and
      Ahmed, Nesreen K.",
    journal = "Computational Linguistics",
    volume = "50",
    number = "3",
    month = sep,
    year = "2024",
    address = "Cambridge, MA",
    publisher = "MIT Press",
    url = "https://aclanthology.org/2024.cl-3.8/",
    doi = "10.1162/coli_a_00524",
    pages = "1097--1179"
}

@misc{kiciman2024causalreasoninglargelanguage,
      title={Causal Reasoning and Large Language Models: Opening a New Frontier for Causality}, 
      author={Emre Kıcıman and Robert Ness and Amit Sharma and Chenhao Tan},
      year={2024},
      eprint={2305.00050},
      archivePrefix={arXiv},
      primaryClass={cs.AI},
      url={https://arxiv.org/abs/2305.00050}, 
}

@article{zevcevic2023causal,
  title={Causal parrots: Large language models may talk causality but are not causal},
  author={Ze{\v{c}}evi{\'c}, Matej and Willig, Moritz and Dhami, Devendra Singh and Kersting, Kristian},
  journal={arXiv preprint arXiv:2308.13067},
  year={2023}
}

@misc{jin2024cladderassessingcausalreasoning,
      title={CLadder: Assessing Causal Reasoning in Language Models}, 
      author={Zhijing Jin and Yuen Chen and Felix Leeb and Luigi Gresele and Ojasv Kamal and Zhiheng Lyu and Kevin Blin and Fernando Gonzalez Adauto and Max Kleiman-Weiner and Mrinmaya Sachan and Bernhard Schölkopf},
      year={2024},
      eprint={2312.04350},
      archivePrefix={arXiv},
      primaryClass={cs.CL},
      url={https://arxiv.org/abs/2312.04350}, 
}

@inproceedings{
jin2024can,
title={Can Large Language Models Infer Causation from Correlation?},
author={Zhijing Jin and Jiarui Liu and Zhiheng LYU and Spencer Poff and Mrinmaya Sachan and Rada Mihalcea and Mona T. Diab and Bernhard Sch{\"o}lkopf},
booktitle={The Twelfth International Conference on Learning Representations},
year={2024},
url={https://openreview.net/forum?id=vqIH0ObdqL}
}

@inproceedings{joshi-etal-2024-llms,
    title = "{LLM}s Are Prone to Fallacies in Causal Inference",
    author = "Joshi, Nitish  and
      Saparov, Abulhair  and
      Wang, Yixin  and
      He, He",
    editor = "Al-Onaizan, Yaser  and
      Bansal, Mohit  and
      Chen, Yun-Nung",
    booktitle = "Proceedings of the 2024 Conference on Empirical Methods in Natural Language Processing",
    month = nov,
    year = "2024",
    address = "Miami, Florida, USA",
    publisher = "Association for Computational Linguistics",
    url = "https://aclanthology.org/2024.emnlp-main.590/",
    doi = "10.18653/v1/2024.emnlp-main.590",
    pages = "10553--10569"
}

@inproceedings{
abdulaal2024causal,
title={Causal Modelling Agents: Causal Graph Discovery through Synergising Metadata- and Data-driven Reasoning},
author={Ahmed Abdulaal and adamos hadjivasiliou and Nina Montana-Brown and Tiantian He and Ayodeji Ijishakin and Ivana Drobnjak and Daniel C. Castro and Daniel C. Alexander},
booktitle={The Twelfth International Conference on Learning Representations},
year={2024},
url={https://openreview.net/forum?id=pAoqRlTBtY}
}

@inproceedings{liu2024discovery,
 author = {Liu, Chenxi and Chen, Yongqiang and Liu, Tongliang and Gong, Mingming and Cheng, James and Han, Bo and Zhang, Kun},
 booktitle = {Advances in Neural Information Processing Systems},
 doi = {10.52202/079017-3249},
 editor = {A. Globerson and L. Mackey and D. Belgrave and A. Fan and U. Paquet and J. Tomczak and C. Zhang},
 pages = {102307--102365},
 publisher = {Curran Associates, Inc.},
 title = {Discovery of the Hidden World with Large Language Models},
 url = {https://proceedings.neurips.cc/paper_files/paper/2024/file/b99a07486702417d3b1bd64ec2cf74ad-Paper-Conference.pdf},
 volume = {37},
 year = {2024}
}

@inproceedings{vig2020investigating,
 author = {Vig, Jesse and Gehrmann, Sebastian and Belinkov, Yonatan and Qian, Sharon and Nevo, Daniel and Singer, Yaron and Shieber, Stuart},
 booktitle = {Advances in Neural Information Processing Systems},
 editor = {H. Larochelle and M. Ranzato and R. Hadsell and M.F. Balcan and H. Lin},
 pages = {12388--12401},
 publisher = {Curran Associates, Inc.},
 title = {Investigating Gender Bias in Language Models Using Causal Mediation Analysis},
 url = {https://proceedings.neurips.cc/paper_files/paper/2020/file/92650b2e92217715fe312e6fa7b90d82-Paper.pdf},
 volume = {33},
 year = {2020}
}

@inproceedings{zhou-etal-2023-causal,
    title = "Causal-Debias: Unifying Debiasing in Pretrained Language Models and Fine-tuning via Causal Invariant Learning",
    author = "Zhou, Fan  and
      Mao, Yuzhou  and
      Yu, Liu  and
      Yang, Yi  and
      Zhong, Ting",
    editor = "Rogers, Anna  and
      Boyd-Graber, Jordan  and
      Okazaki, Naoaki",
    booktitle = "Proceedings of the 61st Annual Meeting of the Association for Computational Linguistics (Volume 1: Long Papers)",
    month = jul,
    year = "2023",
    address = "Toronto, Canada",
    publisher = "Association for Computational Linguistics",
    url = "https://aclanthology.org/2023.acl-long.232/",
    doi = "10.18653/v1/2023.acl-long.232",
    pages = "4227--4241"
}

@inproceedings{li2025prompting,
 author = {Li, Jingling and Tang, Zeyu and Liu, Xiaoyu and Spirtes, Peter and Zhang, Kun and Leqi, Liu and Liu, Yang},
 booktitle = {International Conference on Learning Representations},
 editor = {Y. Yue and A. Garg and N. Peng and F. Sha and R. Yu},
 pages = {63606--63629},
 title = {Prompting Fairness: Integrating Causality to Debias Large Language Models},
 url = {https://proceedings.iclr.cc/paper_files/paper/2025/file/9f26a2d143a227376dff99a279f93f99-Paper-Conference.pdf},
 volume = {2025},
 year = {2025}
}

@inproceedings{kusner2017counterfactual,
 author = {Kusner, Matt J and Loftus, Joshua and Russell, Chris and Silva, Ricardo},
 booktitle = {Advances in Neural Information Processing Systems},
 editor = {I. Guyon and U. Von Luxburg and S. Bengio and H. Wallach and R. Fergus and S. Vishwanathan and R. Garnett},
 pages = {},
 publisher = {Curran Associates, Inc.},
 title = {Counterfactual Fairness},
 url = {https://proceedings.neurips.cc/paper_files/paper/2017/file/a486cd07e4ac3d270571622f4f316ec5-Paper.pdf},
 volume = {30},
 year = {2017}
}

@inproceedings{kilbertus2017avoiding,
 author = {Kilbertus, Niki and Rojas Carulla, Mateo and Parascandolo, Giambattista and Hardt, Moritz and Janzing, Dominik and Sch\"{o}lkopf, Bernhard},
 booktitle = {Advances in Neural Information Processing Systems},
 editor = {I. Guyon and U. Von Luxburg and S. Bengio and H. Wallach and R. Fergus and S. Vishwanathan and R. Garnett},
 pages = {},
 publisher = {Curran Associates, Inc.},
 title = {Avoiding Discrimination through Causal Reasoning},
 url = {https://proceedings.neurips.cc/paper_files/paper/2017/file/f5f8590cd58a54e94377e6ae2eded4d9-Paper.pdf},
 volume = {30},
 year = {2017}
}

@inproceedings{nabi2018fair,
  title={Fair inference on outcomes},
  author={Nabi, Razieh and Shpitser, Ilya},
  booktitle={Proceedings of the AAAI conference on artificial intelligence},
  volume={32 (1)},
  year={2018}
}

@inproceedings{chiappa2019path,
  title={Path-specific counterfactual fairness},
  author={Chiappa, Silvia},
  booktitle={Proceedings of the AAAI conference on artificial intelligence},
  volume={33 (01)},
  pages={7801--7808},
  year={2019}
}

@inproceedings{jung2024counterfactually,
author = {Jung, Sangwon and Yu, Sumin and Chun, Sanghyuk and Moon, Taesup},
title = {Do counterfactually fair image classifiers satisfy group fairness? - a theoretical and empirical study},
year = {2024},
isbn = {9798331314385},
publisher = {Curran Associates Inc.},
address = {Red Hook, NY, USA},
booktitle = {Proceedings of the 38th International Conference on Neural Information Processing Systems},
articleno = {1782},
numpages = {13},
location = {Vancouver, BC, Canada},
series = {NIPS '24}
}

@inproceedings{cheong2023counterfactual,
author = {Cheong, Jiaee and Kalkan, Sinan and Gunes, Hatice},
title = {Counterfactual Fairness for Facial Expression Recognition},
year = {2022},
isbn = {978-3-031-25071-2},
publisher = {Springer-Verlag},
address = {Berlin, Heidelberg},
url = {https://doi.org/10.1007/978-3-031-25072-9_16},
doi = {10.1007/978-3-031-25072-9_16},
booktitle = {Computer Vision – ECCV 2022 Workshops: Tel Aviv, Israel, October 23–27, 2022, Proceedings, Part V},
pages = {245–261},
numpages = {17},
location = {Tel Aviv, Israel}
}

@inproceedings{joo2020gender,
author = {Joo, Jungseock and K{\"a}rkk{\"a}inen, Kimmo},
title = {Gender Slopes: Counterfactual Fairness for Computer Vision Models by Attribute Manipulation},
year = {2020},
isbn = {9781450381482},
publisher = {Association for Computing Machinery},
address = {New York, NY, USA},
url = {https://doi.org/10.1145/3422841.3423533},
doi = {10.1145/3422841.3423533},
booktitle = {Proceedings of the 2nd International Workshop on Fairness, Accountability, Transparency and Ethics in Multimedia},
pages = {1–5},
numpages = {5},
location = {Seattle, WA, USA},
series = {FATE/MM '20}
}

@inproceedings{garg2019counterfactual,
author = {Garg, Sahaj and Perot, Vincent and Limtiaco, Nicole and Taly, Ankur and Chi, Ed H. and Beutel, Alex},
title = {Counterfactual Fairness in Text Classification through Robustness},
year = {2019},
isbn = {9781450363242},
publisher = {Association for Computing Machinery},
address = {New York, NY, USA},
url = {https://doi.org/10.1145/3306618.3317950},
doi = {10.1145/3306618.3317950},
booktitle = {Proceedings of the 2019 AAAI/ACM Conference on AI, Ethics, and Society},
pages = {219–226},
numpages = {8},
location = {Honolulu, HI, USA},
series = {AIES '19}
}

@misc{ons2023synthetic,
  author = {{Office for National Statistics}},
  title = {Enabling Data Access through Privacy Preserving Synthetic Data},
  year = {2023},
  month = nov,
  url = {https://datasciencecampus.ons.gov.uk/enabling-data-access-through-privacy-preserving-synthetic-data/},
  note = {Data Science Campus}
}

@misc{plecko2026causalbiasdetectiongenerative,
      title={Causal Bias Detection in Generative Artificial Intelligence}, 
      author={Drago Plecko},
      year={2026},
      eprint={2605.11365},
      archivePrefix={arXiv},
      primaryClass={cs.AI},
      url={https://arxiv.org/abs/2605.11365}, 
}

@misc{chernozhukov2018double,
  title={Double/debiased machine learning for treatment and structural parameters},
  author={Chernozhukov, Victor and Chetverikov, Denis and Demirer, Mert and Duflo, Esther and Hansen, Christian and Newey, Whitney and Robins, James},
  year={2018},
  publisher={Oxford University Press Oxford, UK}
}

@misc{samhsa2023nsduh,
  author       = {{Substance Abuse and Mental Health Services Administration}},
  title        = {{National Survey on Drug Use and Health (NSDUH)}},
  year         = {2023},
  publisher    = {U.S. Department of Health and Human Services},
  url          = {https://www.samhsa.gov/data/data-we-collect/nsduh-national-survey-drug-use-and-health},
  note         = {Accessed: 2026-09-09}
}

@misc{cdc2023brfss,
  author    = {{Centers for Disease Control and Prevention}},
  title     = {{Behavioral Risk Factor Surveillance System: 2023 Survey Data and Documentation}},
  year      = {2023},
  publisher = {U.S. Department of Health and Human Services},
  url       = {https://www.cdc.gov/brfss/annual_data/annual_2023.html},
  note      = {Accessed: 2026-09-09}
}

@misc{uscensus2023acs,
  author    = {{U.S. Census Bureau}},
  title     = {{2023 American Community Survey 1-Year Estimates}},
  year      = {2023},
  publisher = {U.S. Census Bureau},
  url       = {https://www.census.gov/data/developers/data-sets/acs-1year/2023.html},
  note      = {Accessed: 2026-09-09}
}

@Article{plecko2024fairadapt,
  title = {Fair data adaptation with quantile preservation},
  author = {Drago Plecko and Nicolai Meinshausen},
  journal = {Journal of Machine Learning Research},
  year = {2020},
  volume = {21},
  number = {242},
  pages = {1--44},
  url = {https://www.jmlr.org/papers/v21/19-966.html},
}

@article{cevid2022distributional,
  title={Distributional random forests: Heterogeneity adjustment and multivariate distributional regression},
  author={Cevid, Domagoj and Michel, Loris and N{\"a}f, Jeffrey and B{\"u}hlmann, Peter and Meinshausen, Nicolai},
  journal={Journal of Machine Learning Research},
  volume={23},
  number={333},
  pages={1--79},
  year={2022}
}

@inproceedings{madhavan-etal-2023-cfl,
    title = "{CFL}: Causally Fair Language Models Through Token-level Attribute Controlled Generation",
    author = "Madhavan, Rahul  and
      Garg, Rishabh  and
      Wadhawan, Kahini  and
      Mehta, Sameep",
    editor = "Rogers, Anna  and
      Boyd-Graber, Jordan  and
      Okazaki, Naoaki",
    booktitle = "Findings of the Association for Computational Linguistics: ACL 2023",
    month = jul,
    year = "2023",
    address = "Toronto, Canada",
    publisher = "Association for Computational Linguistics",
    url = "https://aclanthology.org/2023.findings-acl.720/",
    doi = "10.18653/v1/2023.findings-acl.720",
    pages = "11344--11358"
}

@inproceedings{brown2020language,
 author = {Brown, Tom and Mann, Benjamin and Ryder, Nick and Subbiah, Melanie and Kaplan, Jared D and Dhariwal, Prafulla and Neelakantan, Arvind and Shyam, Pranav and Sastry, Girish and Askell, Amanda and Agarwal, Sandhini and Herbert-Voss, Ariel and Krueger, Gretchen and Henighan, Tom and Child, Rewon and Ramesh, Aditya and Ziegler, Daniel and Wu, Jeffrey and Winter, Clemens and Hesse, Chris and Chen, Mark and Sigler, Eric and Litwin, Mateusz and Gray, Scott and Chess, Benjamin and Clark, Jack and Berner, Christopher and McCandlish, Sam and Radford, Alec and Sutskever, Ilya and Amodei, Dario},
 booktitle = {Advances in Neural Information Processing Systems},
 editor = {H. Larochelle and M. Ranzato and R. Hadsell and M.F. Balcan and H. Lin},
 pages = {1877--1901},
 publisher = {Curran Associates, Inc.},
 title = {Language Models are Few-Shot Learners},
 url = {https://proceedings.neurips.cc/paper_files/paper/2020/file/1457c0d6bfcb4967418bfb8ac142f64a-Paper.pdf},
 volume = {33},
 year = {2020}
}

@article{kasneci2023chatgpt,
title = {ChatGPT for good? On opportunities and challenges of large language models for education},
journal = {Learning and Individual Differences},
volume = {103},
pages = {102274},
year = {2023},
issn = {1041-6080},
doi = {https://doi.org/10.1016/j.lindif.2023.102274},
url = {https://www.sciencedirect.com/science/article/pii/S1041608023000195},
author = {Enkelejda Kasneci and Kathrin Sessler and Stefan Küchemann and Maria Bannert and Daryna Dementieva and Frank Fischer and Urs Gasser and Georg Groh and Stephan Günnemann and Eyke Hüllermeier and Stephan Krusche and Gitta Kutyniok and Tilman Michaeli and Claudia Nerdel and Jürgen Pfeffer and Oleksandra Poquet and Michael Sailer and Albrecht Schmidt and Tina Seidel and Matthias Stadler and Jochen Weller and Jochen Kuhn and Gjergji Kasneci}
}

@article{singhal2023large,
  title={Large language models encode clinical knowledge},
  author={Singhal, Karan and Azizi, Shekoofeh and Tu, Tao and Mahdavi, S Sara and Wei, Jason and Chung, Hyung Won and Scales, Nathan and Tanwani, Ajay and Cole-Lewis, Heather and Pfohl, Stephen and others},
  journal={Nature},
  volume={620},
  number={7972},
  pages={172--180},
  year={2023},
  publisher={Nature Publishing Group UK London}
}

@inproceedings{an-etal-2024-large,
    title = "Do Large Language Models Discriminate in Hiring Decisions on the Basis of Race, Ethnicity, and Gender?",
    author = "An, Haozhe  and
      Acquaye, Christabel  and
      Wang, Colin  and
      Li, Zongxia  and
      Rudinger, Rachel",
    editor = "Ku, Lun-Wei  and
      Martins, Andre  and
      Srikumar, Vivek",
    booktitle = "Proceedings of the 62nd Annual Meeting of the Association for Computational Linguistics (Volume 2: Short Papers)",
    month = aug,
    year = "2024",
    address = "Bangkok, Thailand",
    publisher = "Association for Computational Linguistics",
    url = "https://aclanthology.org/2024.acl-short.37/",
    doi = "10.18653/v1/2024.acl-short.37",
    pages = "386--397"
}

@misc{yang2025qwen3,
      title={Qwen3 Technical Report}, 
      author={An Yang and Anfeng Li and Baosong Yang and Beichen Zhang and Binyuan Hui and Bo Zheng and Bowen Yu and Chang Gao and Chengen Huang and Chenxu Lv and Chujie Zheng and Dayiheng Liu and Fan Zhou and Fei Huang and Feng Hu and Hao Ge and Haoran Wei and Huan Lin and Jialong Tang and Jian Yang and Jianhong Tu and Jianwei Zhang and Jianxin Yang and Jiaxi Yang and Jing Zhou and Jingren Zhou and Junyang Lin and Kai Dang and Keqin Bao and Kexin Yang and Le Yu and Lianghao Deng and Mei Li and Mingfeng Xue and Mingze Li and Pei Zhang and Peng Wang and Qin Zhu and Rui Men and Ruize Gao and Shixuan Liu and Shuang Luo and Tianhao Li and Tianyi Tang and Wenbiao Yin and Xingzhang Ren and Xinyu Wang and Xinyu Zhang and Xuancheng Ren and Yang Fan and Yang Su and Yichang Zhang and Yinger Zhang and Yu Wan and Yuqiong Liu and Zekun Wang and Zeyu Cui and Zhenru Zhang and Zhipeng Zhou and Zihan Qiu},
      year={2025},
      eprint={2505.09388},
      archivePrefix={arXiv},
      primaryClass={cs.CL},
      url={https://arxiv.org/abs/2505.09388}, 
}

@article{grattafiori2024llama,
  title={The llama 3 herd of models},
  author={Grattafiori, Aaron and Dubey, Abhimanyu and Jauhri, Abhinav and Pandey, Abhinav and Kadian, Abhishek and Al-Dahle, Ahmad and Letman, Aiesha and Mathur, Akhil and Schelten, Alan and Vaughan, Alex and others},
  journal={arXiv preprint arXiv:2407.21783},
  year={2024}
}

@misc{qwen38,
    title = {{Qwen3.8-Max}: A New Bar for Coding and Cowork},
    url = {https://qwen.ai/blog?id=qwen3.8},
    author = {{Qwen Team}},
    month = {August},
    year = {2026}
}

@article{gemmateam2026gemma4,
  title={Gemma 4 technical report},
  author={{Gemma Team} and Abd, Sherif El and Aggarwal, Vaibhav and Algayres, Robin and Andreev, Alek and Bachem, Olivier and Ballantyne, Ian and Brick, Cormac and C{\u{a}}rbune, Victor and Casbon, Michelle and others},
  journal={arXiv preprint arXiv:2607.02770},
  year={2026}
}

@article{gneiting2007strictly,
  title={Strictly proper scoring rules, prediction, and estimation},
  author={Gneiting, Tilmann and Raftery, Adrian E},
  journal={Journal of the American statistical Association},
  volume={102},
  number={477},
  pages={359--378},
  year={2007},
  publisher={Taylor \& Francis}
}
